\newtheorem{thm}{Theorem}
\newtheorem{cor}{Corollary}
\newtheorem{rem}{Remark}
\def\E{\mathbb{E}}
\def\R{\mathbb{R}}
\begin{document}

\title{Tight Bounds for Jensen’s Gap with Applications to Variational Inference}


\author{Marcin Mazur}
\orcid{0000-0002-3440-8173}
\affiliation{%
\institution{Jagiellonian University}
\department{Faculty of Mathematics and Computer Science}
\streetaddress{Łojasiewicza 6}
\city{Kraków}
\postcode{30-348}
\country{Poland}}
\email{marcin.mazur@uj.edu.pl}

\author{Tadeusz Dziarmaga}
\orcid{0009-0001-8911-0630}
\affiliation{%
\institution{Jagiellonian University}
\department{Faculty of Mathematics and Computer Science}
\streetaddress{Łojasiewicza 6}
\city{Kraków}
\postcode{30-348}
\country{Poland}}
\email{tadeusz.dziarmaga@student.uj.edu.pl}

\author{Piotr Kościelniak}
\orcid{0000-0003-2389-0643}
\affiliation{%
\institution{Jagiellonian University}
\department{Faculty of Mathematics and Computer Science}
\streetaddress{Łojasiewicza 6}
\city{Kraków}
\postcode{30-348}
\country{Poland}}
\email{piotr.koscielniak@uj.edu.pl}

\author{Łukasz Struski}
\orcid{0000-0003-4006-356X}
\affiliation{%
\institution{Jagiellonian University}
\department{Faculty of Mathematics and Computer Science}
\streetaddress{Łojasiewicza 6}
\city{Kraków}
\postcode{30-348}
\country{Poland}}
\email{lukasz.struski@uj.edu.pl}



\begin{abstract}
  Since its original formulation, Jensen’s inequality has played a fundamental role across mathematics, statistics, and machine learning, with its probabilistic version highlighting the nonnegativity of the so-called Jensen’s gap, i.e., the difference between the expectation of a convex function and the function at the expectation. Of particular importance is the case when the function is logarithmic, as this setting underpins many applications in variational inference, where the term variational gap is often used interchangeably. Recent research has focused on estimating the size of Jensen’s gap and establishing tight lower and upper bounds under various assumptions on the underlying function and distribution, driven by practical challenges such as the intractability of log-likelihood in graphical models like variational autoencoders (VAEs). In this paper, we propose new, general bounds for Jensen’s gap that accommodate a broad range of assumptions on both the function and the random variable, with special attention to exponential and logarithmic cases. We provide both analytical and empirical evidence for the performance of our method. Furthermore, we relate our bounds to the PAC-Bayes framework, providing new insights into generalization performance in probabilistic models.
\end{abstract}

\begin{CCSXML}
<ccs2012>
   <concept>
       <concept_id>10002950.10003648.10003662.10003664</concept_id>
       <concept_desc>Mathematics of computing~Bayesian computation</concept_desc>
       <concept_significance>500</concept_significance>
       </concept>
   <concept>
       <concept_id>10010147.10010178.10010187.10010190</concept_id>
       <concept_desc>Computing methodologies~Probabilistic reasoning</concept_desc>
       <concept_significance>500</concept_significance>
       </concept>
   <concept>
       <concept_id>10010147.10010257.10010293.10010300</concept_id>
       <concept_desc>Computing methodologies~Learning in probabilistic graphical models</concept_desc>
       <concept_significance>500</concept_significance>
       </concept>
   <concept>
       <concept_id>10010147.10010257.10010293.10010300.10010305</concept_id>
       <concept_desc>Computing methodologies~Latent variable models</concept_desc>
       <concept_significance>500</concept_significance>
       </concept>
 </ccs2012>
\end{CCSXML}

\ccsdesc[500]{Mathematics of computing~Bayesian computation}
\ccsdesc[500]{Computing methodologies~Probabilistic reasoning}
\ccsdesc[500]{Computing methodologies~Learning in probabilistic graphical models}
\ccsdesc[500]{Computing methodologies~Latent variable models}

\keywords{Jensen's inequality;
Variational inference;
Probablistic model;
Generalization risk}


\maketitle

\section{Introduction}

Since its first publication in \citep{jensen1906fonctions}, Jensen's inequality significantly influenced different scientific fields, including pure and applied mathematics, statistical inference, and machine learning. In the probabilistic setting, it proclaims the nonnegativity of the so-called \textit{Jensen's gap}, which (in its general form) 
is given by the following formula:
\begin{equation}\label{eq:jensensgap}
\mathcal{JG}(f,X)=\E\{f(X)\}-f(\E X),
\end{equation}
where $X$ is a random variable and $f$ is a convex function. Probably the most valuable setting is the case when $f=-\log$, due to many important applications of such version of Jensen's inequality in variational inference. Therefore, the term \textit{variational gap} has been frequently used instead. (Nevertheless, our paper refers to the notion of Jensen's gap regardless of the context.)

Recently, the problem of estimating the size of Jensen's gap has been intensively investigated. Most authors were particularly concerned about finding upper or lower bounds for $\mathcal{JG}(f,X)$ under various assumptions on $f$ and/or $X$. Specifically, 
notable bounds were provided for analytic or superquadratic $f$ (see \citep{abramovich2004refining,dragomir2015inequality,walker2014lower,banic2008superquadratic,lovrivcevic2018zipf}), 
and for $X$ that follows a distribution for which (central) moments or more complicated expectations are known
(see \citep{abramovich2004refining,dragomir2001some,dragomir2015inequality,gao2017bounds,liao2019sharpening,pecaric1985companion,walker2014lower}).
On the other hand, many related results obtained for the logarithmic function $f$ were motivated by the necessity of applying variational inference methods in training probabilistic models to deal with the problem of the intractability of the log-likelihood of data. In such situations, optimizing appropriate lower (or occasionally upper) bounds instead occurred as an effective solution. Among others, this was the case of variational autoencoder (VAE) \citep{kingma2013auto,rezende2015variational}, which used the evidence lower bound (ELBO) derived directly from Jensen's inequality and therefore suffered from the presence of variational gap. Consequently, finding and tightening bounds on the log-likelihood of the model distribution (and hence the associated variational gap) became an important issue that was intensively investigated by the machine learning community (see, e.g., \citep{burda2015importance,dieng2017variational,ji2019stochastic,grosse2015sandwiching,maddison2017filtering,masrani2019thermodynamic,nowozin2018debiasing,pmlr-v206-struski23a}). Further details concerning related work may be found in Section~\ref{sec:relwork}.

In this paper, we introduce novel general lower and upper bounds for Jensen's gap $\mathcal{JG}(f,X)$. Involving different sets of assumptions on $f$ and $X$, with particular attention paid to the cases of exponential or logarithmic $f$, we provide both analytical and empirical arguments that our approach is superior to those presented in \citep{pmlr-v206-struski23a,liao2019sharpening,lee2021further} in a number of situations. Specifically, experiments conducted on real-world data demonstrate that our approach may provide tighter bounds than other existing techniques and thus prove to be highly effective for estimating the log-likelihood of variational models. Moreover, we illustrate how these bounds can in principle be integrated into the PAC-Bayes framework, suggesting a path towards refined, data-dependent generalization guarantees for probabilistic models, although a full development of such bounds is left to future work.

\section{Related Work}\label{sec:relwork}

In the general setting, several researchers discussed the problem of Jen\-sen's gap estimation. In particular, a variance-based lower bound for strongly convex functions was proposed in \citep{bakula2016converse}, then improved and extended (to an upper bound) in \citep{liao2019sharpening}, and further in \citep{lee2021further}. Both of the latter results were based on Taylor's expansion and had insightful forms both in terms of function derivatives and distribution moments of second and higher order. However, these approaches often yielded boundary values (i.e., 0 and $\infty$) when used to compute bounds for common continuous (e.g., Gaussian) distributions. (In such cases, the authors suggested using a support partitioning method, as proposed in \citep{walker2014lower}.) Other related results have been developed (see, e.g., \citep{abramovich2004refining,dragomir2001some,dragomir2015inequality,horvath2014refinement,pecaric1985companion,walker2014lower,lovrivcevic2018zipf,banic2008superquadratic}), but they either had a complicated form or required additional assumptions about the analyticity or superquadraticity of the function.

In the machine learning setting, the most interesting bounds on Jensen's gap have been those obtained for logarithmic functions. This was due to the intractability of the log-likelihood of probabilistic models designed using variational inference methods. Training such models as variational autoencoders (VAEs) \citep{kingma2013auto,rezende2015variational} or importance-weighted autoencoders (IWAEs) \citep{burda2015importance} was instead reduced to maximization of the respective lower bounds (i.e, ELBO for VAEs and IW-ELBO for IWAEs). On the other hand, several approaches focused on finding the effective upper bound were also considered. Among others, the following proposals were introduced and investigated: $\chi$ upper bound (CUBO) \citep{dieng2017variational}, evidence upper bound (EUBO) \citep{ji2019stochastic}, and an upper bound variant of the thermodynamic variational objective (TVO) \citep{masrani2019thermodynamic}, which was a generalization of EUBO. However, the approach of our particular interest was proposed in~\citep{pmlr-v206-struski23a}, who studied in detail both the general case and the case of the logarithmic function. As the results presented there became the direct motivation for our work, we refer to them many times throughout the paper.

Another line of work where Jensen's inequality plays a central role is in the PAC-Bayes framework for generalization bounds in probabilistic models \citep{mcallester1999pac,catoni2007pac}. These methods typically rely on a first order application of Jensen's inequality to upper-bound the true risk (e.g., cross-entropy) by an expected empirical loss under a posterior distribution. However, a recent work of \citep{NEURIPS2020_3ac48664} notes the looseness of such bounds and proposes variance-based second order corrections. Building on this insight, our work explores higher order generalizations of Jensen's gap and shows how these can, in principle, be integrated into PAC-Bayes bounds to yield tighter and more informative generalization guarantees.

\section{Theory}\label{sec:theory}

As we have already emphasized, our contribution is supported by rigorous mathematical results. We present them in a consistent manner, starting from a general framework and ending with the case of the logarithmic function and the log-normal distribution, which is particularly important for further applications (see Section \ref{sec:applications}).

\subsection{General Bounds}\label{sec:generalbounds}
In this subsection, we present the main results of the paper in the general case. First, we recall simple bounds for Jensen's gap given in \citep{pmlr-v206-struski23a} that involve low order moments, and then we improve them using higher order expectations.

Let $f\colon (a,b) \to \mathbb{R}$ be a continuous function and $X\colon \Omega\to (a,b)$ be a random variable on a probability space $(\Omega, \Sigma, P)$. In addition, to make Jensen's gap $\mathcal{JG}(f,X)$ (given in \eqref{eq:jensensgap}) well defined, assume that $X$ and $f(X)$ have finite first moments.

\begin{thm}[Derived from \citep{pmlr-v206-struski23a}]\label{thm:simple}
If $f$ is a twice differentiable convex function, then we have the following inequalities: 
\begin{equation}\label{eq:simpleboundsold}
\begin{array}{@{}l @{\;} l @{\;} l}
0 &\leq & \mathcal{JG}(f,X)  \\
& \leq &\E\{Xf'(X)\}-\E X \E\{f'(X)\}
 =  \mathrm{cov}\{X,f'(X)\},  
\end{array}
\end{equation}
provided that appropriate finite expected values exist.
\end{thm}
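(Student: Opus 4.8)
The plan is to obtain both inequalities from the elementary fact that a twice differentiable convex function lies on or above each of its tangent lines. The left-hand inequality $0 \le \mathcal{JG}(f,X)$ is nothing but Jensen's inequality itself, so I would either invoke it directly or re-derive it in one line from the tangent line at $\E X$, using convexity and the assumed finiteness of $\E X$ and $\E\{f(X)\}$.

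For the upper bound, set $\mu = \E X$. By Taylor's theorem with Lagrange remainder (here the twice differentiability hypothesis is used), for every $x \in (a,b)$ there is a point $\xi$ strictly between $x$ and $\mu$ such that
\[
f(\mu) = f(x) + f'(x)(\mu - x) + \tfrac12 f''(\xi)(\mu - x)^2 \;\ge\; f(x) + f'(x)(\mu - x),
\]
where the inequality is exactly the place convexity enters, through $f'' \ge 0$. Substituting $x = X$ and taking expectations — legitimate precisely under the stated proviso that the relevant expected values are finite — yields
\[
f(\mu) \;\ge\; \E\{f(X)\} + \mu\,\E\{f'(X)\} - \E\{X f'(X)\}.
\]
Rearranging gives $\mathcal{JG}(f,X) = \E\{f(X)\} - f(\mu) \le \E\{X f'(X)\} - \E X\,\E\{f'(X)\}$, and the displayed identity with $\mathrm{cov}\{X, f'(X)\}$ is merely the definition of covariance.

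I do not expect a genuine obstacle: the whole argument is routine. The two points deserving a sentence of care are (i) the integrability needed to pass the tangent-line bound through the expectation, namely that $f'(X)(\mu - X)$ is integrable, which is what the ``appropriate finite expected values exist'' clause guarantees; and (ii) the minor measurability remark that the intermediate point $\xi$ in Taylor's formula can be chosen as a measurable function of $X$ (or, to sidestep it entirely, one may instead use the plain tangent-line inequality $f(\mu)\ge f(x)+f'(x)(\mu-x)$, which follows from monotonicity of $f'$ and needs only first differentiability).
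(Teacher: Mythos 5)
Your proposal is correct and follows the same route the paper indicates: the lower bound is the first-order Taylor (tangent-line) expansion about $\E X$, and the upper bound is the first-order expansion about the point $x=X$ evaluated at $\E X$, each integrated over $\Omega$ — exactly the $k=1$ case of the paper's Theorem~\ref{thm:improved}. Your added remarks on integrability and on avoiding the measurability of the intermediate point $\xi$ via the plain tangent-line inequality are sensible refinements but do not change the argument.
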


The proof of Theorem \ref{thm:simple} is based on the first order Taylor's expansions. To improve the bounds given in \eqref{eq:simplebounds} we apply higher order Taylor's expansions, which leads to the following general result.

\begin{thm}\label{thm:improved}
If $f$ is a $2k$-differentiable function satisfying $f^{(2k)}(x)$ $ \geq 0$ for any $x\in (a,b)$, then we have the following inequalities:
\begin{equation}\label{eq:simpleboundsimp1}
\begin{array}{@{}l @{\;} r @{\;} l}
\sum_{i=1}^{2k-1} & \!\! a_{i,0} \!
\!& f^{(i)}  (\E X)\E\{(X-\E X)^i\} \leq  \mathcal{JG}(f,X)\\
 &\leq &  -\sum_{i=1}^{2k-1}a_{i,i} \E\{f^{(i)}(X)(X-\E X)^{i}\},
\end{array}
\end{equation}
and (equivalently)
\begin{equation}\label{eq:simpleboundsimp}
\begin{array}{@{}l @{\;} r @{\;} l}
 & \!\!\sum_{i=1}^{2k-1} &  \sum_{j=0}^i   a_{i,j}   f^{(i)}(\E X)\E X^{i-j}(\E X)^{j} \leq  \mathcal{JG}(f,X)\\
 & \leq & -\sum_{i=1}^{2k-1}\sum_{j=0}^ia_{i,j}\E\{f^{(i)}(X)X^{j}\}(\E X)^{i-j},
\end{array}
\end{equation}
where $a_{i,j}=\frac{(-1)^{j}}{i!}{i\choose j}=\frac{(-1)^{j}}{j!(i-j)!}$, provided that appropriate finite expected values exist.
\end{thm}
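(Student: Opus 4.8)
The plan is to apply Taylor's theorem with the Lagrange form of the remainder \emph{twice} and then use the sign hypothesis $f^{(2k)}\ge 0$ to discard each remainder term with a favourable sign. Throughout set $\mu=\E X$.

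\textbf{Lower bound.} Fix $\omega\in\Omega$ and expand $f$ to order $2k$ about $\mu$: there is a point $\xi$ between $X(\omega)$ and $\mu$ such that
\[
f(X)=\sum_{i=0}^{2k-1}\frac{f^{(i)}(\mu)}{i!}(X-\mu)^i+\frac{f^{(2k)}(\xi)}{(2k)!}(X-\mu)^{2k}.
\]
Since $f^{(2k)}(\xi)\ge 0$ and $(X-\mu)^{2k}\ge 0$, the remainder is nonnegative, so $f(X)\ge\sum_{i=0}^{2k-1}\frac{f^{(i)}(\mu)}{i!}(X-\mu)^i$ pointwise on $\Omega$. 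Taking expectations (legitimate once the moments involved are finite, which is exactly the stated proviso), the $i=0$ summand gives $f(\mu)$ and the $i=1$ summand vanishes because $\E(X-\mu)=0$; subtracting $f(\mu)$ and recalling $a_{i,0}=1/i!$ yields the left inequality in \eqref{eq:simpleboundsimp1}.

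\textbf{Upper bound.} Expand $f$ to order $2k$ about the random point $X$ and evaluate at $\mu$: there is $\eta$ between $\mu$ and $X(\omega)$ with
\[
f(\mu)=\sum_{i=0}^{2k-1}\frac{f^{(i)}(X)}{i!}(\mu-X)^i+\frac{f^{(2k)}(\eta)}{(2k)!}(\mu-X)^{2k}\ge\sum_{i=0}^{2k-1}\frac{f^{(i)}(X)}{i!}(\mu-X)^i,
\]
again because the remainder is nonnegative. Using $(\mu-X)^i=(-1)^i(X-\mu)^i$ and rearranging gives, pointwise, $f(X)-f(\mu)\le-\sum_{i=1}^{2k-1}\frac{(-1)^i}{i!}f^{(i)}(X)(X-\mu)^i=-\sum_{i=1}^{2k-1}a_{i,i}f^{(i)}(X)(X-\mu)^i$; taking expectations yields the right inequality in \eqref{eq:simpleboundsimp1}.

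\textbf{Equivalent form and main difficulty.} The second pair of bounds \eqref{eq:simpleboundsimp} follows from \eqref{eq:simpleboundsimp1} by a purely algebraic rewrite: expand $(X-\mu)^i=\sum_{j=0}^i{i\choose j}(-1)^jX^{i-j}\mu^j$ in the lower bound and $(\mu-X)^i=\sum_{j=0}^i{i\choose j}(-1)^jX^{j}\mu^{i-j}$ in the upper bound, apply linearity of expectation, and substitute $a_{i,j}=\frac{(-1)^j}{i!}{i\choose j}$. No deep analytic tool is required; the step deserving the most care is the passage from the pointwise Taylor inequalities to the corresponding inequalities in expectation, which needs each expectation occurring on the right-hand sides --- the (central or raw) moments of $X$ and the mixed terms $\E\{f^{(i)}(X)(X-\mu)^i\}$, equivalently $\E\{f^{(i)}(X)X^j\}$ --- to be finite and well-defined, so that monotonicity and linearity of expectation may be applied term by term. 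This is precisely what the clause ``provided that appropriate finite expected values exist'' secures, and since the Taylor estimates are genuine two-sided polynomial controls rather than limiting arguments, nothing like dominated convergence is needed. Finally, ``$2k$-differentiable'' makes $f^{(2k-1)}$ continuous, so the Lagrange form of Taylor's theorem does apply on the relevant closed subintervals of $(a,b)$; and for $k=1$ one recovers Theorem~\ref{thm:simple}, since $a_{1,0}=1$ and $a_{1,1}=-1$.
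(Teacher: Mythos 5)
Your proposal is correct and follows essentially the same route as the paper: Taylor's expansion of order $2k$ about $\E X$ for the lower bound and about the random point $X$ for the upper bound, discarding the nonnegative Lagrange remainder via $f^{(2k)}\ge 0$, taking expectations, and passing to the raw-moment form \eqref{eq:simpleboundsimp} by the binomial (Newton's) expansion. Your additional remarks on the vanishing $i=1$ term, the finiteness proviso, and the applicability of the Lagrange remainder are accurate but do not change the argument.
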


\begin{proof} Using Taylor's expansion in $\E X$ and in  arbitrary $x\in (a,b)$, we obtain
\begin{equation}\label{eq:lowertaylorimp}
\begin{array}{@{}l @{\;} l @{\;} l}
f(x)-f(\E X)& \geq & \sum_{i=1}^{2k-1}\frac{1}{i!}f^{(i)}(\E X)(x-\E X)^i,
\end{array}
\end{equation}
and thus
\begin{equation}\label{eq:lowertaylorimp_02}
\begin{array}{@{}l @{\;} l @{\;} l}
f(X)-f(\E X) & \geq & \sum_{i=1}^{2k-1}\frac{1}{i!}f^{(i)}(\E X)(X-\E X)^i. 
\end{array}
\end{equation}
Then, integrating the above inequality over all $\omega\in \Omega$, we have 
\begin{equation}\label{eq:lowertaylorimp_03}
\!\!\!\!\begin{array}{@{}l @{\;} l @{\;} l}
\int_\Omega f(X)dP & - & \int_\Omega f(\E X)dP\\
&\geq & \sum_{i=1}^{2k-1}\frac{1}{i!} f^{(i)}(\E X)\int_\Omega(X-\E X)^idP 
\end{array}
\end{equation}
and consequently the lower bound in (\ref{eq:simpleboundsimp1}). Using Newton's formula we obtain the lower bound in (\ref{eq:simpleboundsimp}).

Now we use Taylor's expansion in $x\in (a,b)$ and we obtain 
\begin{equation}\label{eq:uppertaylorimp}
\begin{array}{@{}l @{\;} l @{\;} l}
f(\E X)-f(x) & \geq & \sum_{i=1}^{2k-1}\frac{1}{i!}f^{(i)}(x)(\E X-x)^i,
\end{array}
\end{equation}
and thus 
\begin{equation}\label{eq:uppertaylorimp_02}
\begin{array}{@{}l @{\;} l @{\;} l}
\!\!f(X)-f(\E X) & \leq & -\sum_{i=1}^{2k-1}\frac{1}{i!}f^{(i)}(X)(\E X-X)^i. 
\end{array}
\end{equation}
Then, by integrating the above inequality over all $\omega\in \Omega$, we have 
\begin{equation}\label{eq:uppertaylorimp_03}
\begin{array}{@{}l @{\;} l @{\;} l}
\int_\Omega  &f & \!\!\!(X)  dP - \int_\Omega f(\E X)\; dP\\ &\leq &-\sum_{i=1}^{2k-1}\frac{1}{i!}(-1)^i\int_\Omega f^{(i)}(X)(X-\E X)^idP,
\end{array}
\end{equation}
which implies the upper bound in (\ref{eq:simpleboundsimp1}). By applying Newton's formula, we obtain the upper bound in (\ref{eq:simpleboundsimp}). It completes the proof.
\end{proof}

An immediate consequence of the above theorem is the following corollary, which gives Jensen's gap bounds 
for zero-mean distributions.

\begin{cor}\label{cor:improvedzero}
Under the assumptions of Theorem \ref{thm:improved}, if $\E X=0$ then we have the following inequalities:
\begin{equation}\label{eq:simpleboundsimp2}
\begin{array}{@{}l @{\;} l @{\;} l}
\sum_{i=1}^{2k-1}a_{i,0}f^{(i)}(0)&\!\E& \!\!\!X^i \leq \mathcal{JG}(f,X)\\
&\leq & -\sum_{i=1}^{2k-1}a_{i,i}\E\{f^{i}(X)X^{i}\}.
\end{array}
\end{equation}
\end{cor}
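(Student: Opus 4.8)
The plan is to obtain the claim as a direct specialization of Theorem \ref{thm:improved} to the case $\E X = 0$. First I would observe that the hypotheses of the corollary are exactly those of Theorem \ref{thm:improved} supplemented by the extra assumption $\E X = 0$, so both chains of inequalities in \eqref{eq:simpleboundsimp1} hold verbatim, together with the same finiteness-of-expectations caveat.

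Next I would substitute $\E X = 0$ into the lower bound of \eqref{eq:simpleboundsimp1}. Since $\E\{(X - \E X)^i\} = \E X^i$ when $\E X = 0$, and $f^{(i)}(\E X) = f^{(i)}(0)$, the left-hand side collapses to $\sum_{i=1}^{2k-1} a_{i,0} f^{(i)}(0)\E X^i$, which is precisely the lower bound in \eqref{eq:simpleboundsimp2}. Substituting $\E X = 0$ into the upper bound of \eqref{eq:simpleboundsimp1} likewise turns $(X - \E X)^i$ into $X^i$, so the right-hand side becomes $-\sum_{i=1}^{2k-1} a_{i,i}\E\{f^{(i)}(X) X^i\}$, matching the upper bound in \eqref{eq:simpleboundsimp2}.

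Alternatively, the same statement can be read off from the equivalent form \eqref{eq:simpleboundsimp}: when $\E X = 0$, every summand with $j < i$ carries the factor $(\E X)^{i-j} = 0$ and therefore vanishes, leaving only the $j = 0$ contribution $a_{i,0} f^{(i)}(0)\E X^i$ in the lower bound and only the $j = i$ contribution $-a_{i,i}\E\{f^{(i)}(X) X^i\}$ in the upper bound. Either route requires no new estimates, so I do not anticipate any genuine obstacle; the only points demanding minor care are the bookkeeping of the coefficients $a_{i,j}$ and the verification that the existence assumptions of Theorem \ref{thm:improved} pass through unchanged.
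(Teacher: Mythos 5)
Your proposal is correct and matches the paper's intent exactly: the paper offers no separate proof, presenting the corollary as an immediate consequence of Theorem~\ref{thm:improved}, and your substitution of $\E X = 0$ into \eqref{eq:simpleboundsimp1} is precisely that derivation (the $f^{i}(X)$ in the statement is a typo for $f^{(i)}(X)$). The only nit is in your alternative route via \eqref{eq:simpleboundsimp}: in the lower bound the vanishing factor is $(\E X)^{j}$ (so the surviving term is $j=0$), while $(\E X)^{i-j}$ is the factor in the upper bound (surviving term $j=i$); your conclusion is nonetheless right.
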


The following remark demonstrates the application of Theorem~\ref{thm:improved} to superquadratic functions within the context of the findings presented in  \citep{abramovich2004refining} and \citep{abramovich2022new}. Recall that a function $f\colon [0,B) \rightarrow \R$ is called superquadratic if, for all $x \in [0,B)$, there exists a constant $C_f(x) \in \R$ such that the inequality
\begin{equation}\label{eq:superq}
\begin{array}{@{}l @{\;} l @{\;} l}
f(y)-f(x)-C_f(x) (y - x) - f (|y - x|) & \geq & 0
\end{array}
\end{equation}
holds for all $y \in [0,B)$.

\begin{rem}\label{rem:abram} 
Let $f\colon [0,\infty)\to \mathbb{R}$ be a superquadratic function that is continuously differentiable with $f(0)=f'(0)=0$ and four times continuously differentiable on $(0,\infty)$ with $f^{(3)}(x)> 0$ and $f^{(4)}(x)\geq 0$ for any $x\in (0,\infty)$. Then, from Theorem 4 of \citep{abramovich2022new}, there exists an interval $[a,b]$, where $a>0$, such that for any discrete random variable $X$ with  $P(X=x_i)=p_i$, where $a=x_1<\ldots <x_n=b$, we have
\begin{equation}\label{eq:Abram}
\begin{array}{@{}l @{\;} l @{\;} l}
 \E\{f(X)\}&-&f(\E X) -\E\{f(|X-\E X|)\}\\ &\geq &\frac{1}{2}f''(a)\mathrm{var} X -\frac{f(b-a)}{(b-a)^2}\mathrm{var} X. 
\end{array}
\end{equation}
But assuming additionally that $X$ has a right-skewed distribution, i.e.,  $\E\{(X-\E X)^{3}\}\geq 0$, one can easily use a similar reasoning combined with Theorem~\ref{thm:improved} to prove the following inequality:\begin{equation}\label{eq::super}
 \begin{array}{@{}l @{\;} l @{\;} l}
 \E\{f(X)\}&-&f(\E X)  - \E\{f(|X-\E X|)\}\\
 &\geq &\frac{1}{2}f''(\E X)\mathrm{var} X -\frac{f(b-a)}{(b-a)^2}\mathrm{var} X,
 \end{array}
\end{equation}
which gives a stricter estimate than \eqref{eq:Abram}, since $f''$ is an increasing function.
Indeed, from Theorem~\ref{thm:improved} in follows that
\begin{equation}\label{eq::proof1}
 \begin{array}{@{}l @{\;} l @{\;} l}
&\!\!\E&\!\!\!\!\{f(X)\}-f( \E X) \\
&\geq &\frac{1}{2}f''(\E X)\mathrm{var} X +\frac{1}{6}f^{(3)}(\E X) \E\{(X-\E X)^3\}\\
&\geq& \frac{1}{2}f''(\E X)\mathrm{var} X,  
\end{array}
\end{equation}
so we obtain (\ref{eq::super}) by observing that $\E\{f(|X-\E X|)\} \leq \frac{f(b-a)}{(b-a)^2}\mathrm{var} X$ (see the proof of Theorem 4 in \citep{abramovich2022new}). 

On the other hand, note that the case where $f$ satisfies $f^{(3)}(x)\geq 0$ and $f^{(4)}(x)\leq 0$ for $x>0$ (e.g., $f(x)=x^2\log x$ for $x>0$ and $f(0)=0$) is handled by superquadraticity, but not by the techniques used here.
\end{rem}

We conclude this subsection by noting a condition that enhances the precision of the estimates presented in Theorem~\ref{thm:improved}. Specifically, we draw inspiration from Theorem 6 of \citep{pmlr-v206-struski23a} to assert that this is the case, for example, for random variables concentrated around their means. More precisely, we assume that for some small $\varepsilon>0$, the condition $|X - \E X|<\varepsilon$ is satisfied almost surely. However, such an assertion is of limited practical utility, as we discuss at the conclusion of the following subsection.


\subsection{Bounds for Logarithmic Function}\label{subsec:boundslog}

In this subsection, we present the results of the paper in the case of $f=-\log$. We retain the structure of Subsection \ref{sec:generalbounds}, giving successively the counterparts of Theorems \ref{thm:simple} and \ref{thm:improved} (note that Corollary~\ref{cor:improvedzero} can no longer be retained).

Let $X\colon \Omega\to (0,\infty)$ be a random variable on a probability space $(\Omega, \Sigma, P)$ such that $X$ and $\log(X)$ have finite first moments.

The following theorem is a direct consequence of Theorem~\ref{thm:simple}, applied to $f=-\log$.

\begin{thm}[Derived from \citep{pmlr-v206-struski23a}]\label{thm:simplelog}
We have the following inequalities: 
\begin{equation}\label{eq:simplebounds}
\begin{array}{@{}l @{\;} l @{\;} l}
\!0 & \leq & \mathcal{JG}(-\log,X)\leq \E X\E X^{-1}-1=\mathrm{cov}(X,X^{-1}),  
\end{array}
\end{equation}
provided that appropriate finite expected values exist.
\end{thm}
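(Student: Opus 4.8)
The plan is to specialize Theorem~\ref{thm:simple} to $f=-\log$ on the interval $(a,b)=(0,\infty)$; the statement then follows after a one-line simplification of the covariance term. First I would verify the hypotheses of Theorem~\ref{thm:simple}: the function $f(x)=-\log x$ is twice differentiable on $(0,\infty)$ with $f'(x)=-x^{-1}$ and $f''(x)=x^{-2}>0$, so it is (strictly) convex there; and since $X$ and $\log X=-f(X)$ have finite first moments while $0<\E X<\infty$ (so that $\log\E X$ is finite), the gap $\mathcal{JG}(-\log,X)$ is well defined. Hence Theorem~\ref{thm:simple} applies and gives
\[
0\le\mathcal{JG}(-\log,X)\le\E\{Xf'(X)\}-\E X\,\E\{f'(X)\}=\mathrm{cov}\{X,f'(X)\},
\]
valid whenever the expectations involved are finite.

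Next I would simplify the right-hand side using $f'(X)=-X^{-1}$. Then $Xf'(X)\equiv-1$, so $\E\{Xf'(X)\}=-1$ (automatically finite), while $\E\{f'(X)\}=-\E X^{-1}$; substituting yields the upper bound $-1-\bigl(-\E X\,\E X^{-1}\bigr)=\E X\,\E X^{-1}-1$, which is exactly the covariance term $\mathrm{cov}\{X,f'(X)\}$ of \eqref{eq:simpleboundsold} written out for $f=-\log$. This is precisely the asserted chain of (in)equalities, and the proviso ``appropriate finite expected values exist'' unwinds here to the single extra requirement $\E X^{-1}<\infty$, the moment $\E X<\infty$ being already part of the standing assumptions.

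I do not expect a genuine obstacle: the statement is, as advertised, a direct corollary of Theorem~\ref{thm:simple}. The only point deserving a moment's care is the bookkeeping of integrability — finiteness of $\E X$ and $\E\log X$ does not by itself force $\E X^{-1}<\infty$ (a distribution placing enough mass near $0$ can keep $\E\log X$ finite while $\E X^{-1}=\infty$), so this must be carried along as a hypothesis exactly as in the parent theorem, and nothing further needs to be checked.
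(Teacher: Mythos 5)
Your proof is correct and matches the paper's approach exactly: the paper states this theorem as a direct consequence of Theorem~\ref{thm:simple} applied to $f=-\log$, which is precisely the specialization you carry out. Your additional remark on the integrability bookkeeping (that $\E X^{-1}<\infty$ must be assumed separately) is a sensible clarification of the paper's proviso that ``appropriate finite expected values exist.''
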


To derive (from Theorem \ref{thm:improved}) 
pleasant forms of improved bounds for the logarithmic function $f$, non-obvious combinatorial calculus must be employed.

\begin{thm}\label{thm:improvedlog}
For any positive integer $k$, we have the following inequalities:
\begin{equation}\label{eq:improveboundslog1}
\begin{array}{@{}l @{\;} l @{\;} l}
\sum_{i=1}^{2k-1}\frac{(-1)^i}{i}  (&\!\E&\!\!\!X)^{-i}\E\{(X-\E X)^i\} \leq   \mathcal{JG}(-\log,X) \\
&\leq & -\sum_{i=1}^{2k-1}\frac{1}{i} \E\{X^{-i}(X-\E X)^{i}\},
\end{array}
\end{equation}
and (equivalently)
\begin{equation}\label{eq:improvedboundslog}
\begin{array}{@{}l @{\;} l @{\;} l}
\sum_{j=1}^{2k-1}\frac{1}{j} &+& b_{k,j}\E X^{j}(\E X)^{-j} \leq \mathcal{JG}(-\log,X) \\
&\leq & -\sum_{j=1}^{2k-1}\frac{1}{j}+b_{k,j}\E X^{-j}(\E X)^{j},
\end{array}
\end{equation}
where $b_{k,j}=\frac{(-1)^{j}}{j}{2k-1 \choose j}$, provided that appropriate finite expected values exist.
\end{thm}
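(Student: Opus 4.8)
The plan is to specialize Theorem~\ref{thm:improved} to $f=-\log$ on the interval $(a,b)=(0,\infty)$ and then carry out the combinatorial simplification announced just before the statement. First I would record the derivatives $f^{(i)}(x)=(-1)^{i}(i-1)!\,x^{-i}$ for $i\geq 1$; in particular $f^{(2k)}(x)=(2k-1)!\,x^{-2k}>0$ for every $x>0$, so the hypothesis of Theorem~\ref{thm:improved} is satisfied for every positive integer $k$. Substituting $f^{(i)}(\E X)=(-1)^{i}(i-1)!\,(\E X)^{-i}$ into the lower bound of \eqref{eq:simpleboundsimp1} (where $a_{i,0}=1/i!$) and $f^{(i)}(X)=(-1)^{i}(i-1)!\,X^{-i}$ into its upper bound (where $a_{i,i}=(-1)^{i}/i!$), the factorials telescope to $1/i$, the product $(-1)^{i}(-1)^{i}=1$ kills the sign in the upper bound, and \eqref{eq:improveboundslog1} drops out immediately.

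To pass to the equivalent form \eqref{eq:improvedboundslog}, I would expand $(X-\E X)^{i}$ by the binomial theorem inside \eqref{eq:improveboundslog1} (equivalently, substitute $f=-\log$ directly into the Newton-formula version \eqref{eq:simpleboundsimp}), set $m=i-j$, take expectations, and interchange the order of the resulting double summation. A short sign count ($(-1)^{i}(-1)^{i-m}=(-1)^{m}$) shows that after the interchange the coefficient of $\E\{X^{m}\}(\E X)^{-m}$ in the lower bound, respectively of $\E\{X^{-m}\}(\E X)^{m}$ in the upper bound, equals $(-1)^{m}\sum_{i=m}^{2k-1}\tfrac1i{i\choose m}$ when $m\geq 1$, while the $m=0$ term contributes the bare sum $\sum_{i=1}^{2k-1}\tfrac1i$.

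The key step --- the ``non-obvious combinatorial calculus'' --- is the identity
\[
\sum_{i=m}^{n}\frac1i{i\choose m}=\frac1m{n\choose m}\qquad(m\geq 1),
\]
which I would prove by the elementary rewriting $\tfrac1i{i\choose m}=\tfrac1m{i-1\choose m-1}$ followed by the hockey-stick identity $\sum_{i=m}^{n}{i-1\choose m-1}={n\choose m}$. Taking $n=2k-1$ turns the coefficient above into $\tfrac{(-1)^{m}}{m}{2k-1\choose m}=b_{k,m}$, and combining this with the $m=0$ contribution $\sum_{i=1}^{2k-1}\tfrac1i=\sum_{j=1}^{2k-1}\tfrac1j$ (relabeling $m\to j$) assembles exactly into \eqref{eq:improvedboundslog}. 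Specializing to $k=1$, where $b_{1,1}=-1$, recovers Theorem~\ref{thm:simplelog}, which is a convenient consistency check.

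I expect the main obstacle to be purely bookkeeping rather than conceptual: keeping the three sign factors $(-1)^{i}$, $(-1)^{j}$, $(-1)^{i-j}$ straight through the binomial expansion and the re-indexing $m=i-j$, and correctly isolating the $m=0$ term when the summation order is swapped. Once one recognizes that the collapsing identity is precisely what turns the messy double sum into the clean coefficients $b_{k,j}$, the rest is routine; existence of all the expectations involved is assumed in the statement, so no analytic subtlety arises.
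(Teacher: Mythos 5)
Your proposal is correct and follows essentially the same route as the paper: specialize Theorem~\ref{thm:improved} to $f=-\log$ using $f^{(i)}(x)=(-1)^{i}(i-1)!\,x^{-i}$ to get \eqref{eq:improveboundslog1}, then expand, swap the summation order, isolate the $j=0$ term, and collapse the inner sum via $\sum_{i=j}^{n}\tfrac1i\binom{i}{j}=\tfrac1j\binom{n}{j}$ to reach \eqref{eq:improvedboundslog}. The only difference is cosmetic: you additionally supply a proof of that identity (via $\tfrac1i\binom{i}{m}=\tfrac1m\binom{i-1}{m-1}$ and the hockey-stick identity) and a $k=1$ consistency check, both of which the paper omits.
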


\begin{proof} To obtain \eqref{eq:improveboundslog1}, it is enough to use \eqref{eq:simpleboundsimp1} for $f=-\log$ (note that then $f^{(i)}(x)=(-1)^i(i-1)!x^{-i}$). To prove \eqref{eq:improvedboundslog}, we apply \eqref{eq:simpleboundsimp} and appropriate combinatorial calculus. Below, we detail the right-hand side inequality; the other follows analogously.

First, note that using \eqref{eq:simpleboundsimp} for $f=-\log$ leads to the following estimate: 
\begin{equation}\label{eq:simpleboundsimp_log}
\begin{array}{@{}l @{\;} l @{\;} l}
\log&\!\!(&\!\!\!\!\!\!\E X)  -  \E\{\log(X)\}\\
 &\leq & \! -\sum_{i=1}^{2k-1}\sum_{j=0}^i\frac{(-1)^{i+j}}{i}{i \choose j}\E X^{j-i}(\E X)^{i-j},
\end{array}
\end{equation}
which can be rewritten as 
\begin{equation}\label{eq:simpleboundsimp_02}
\!\begin{array}{@{}l @{\;} l @{\;} l}
\log(\E X) & - & \E\{\log(X)\}\\
&\leq & -\sum_{i=1}^{2k-1}\sum_{j=0}^i\frac{(-1)^{j}}{i}{i \choose j}\E X^{-j}(\E X)^{j},
\end{array}
\end{equation}
and, consequently, as
\begin{equation}\label{eq:simpleboundsimp_03}
\begin{array}{@{}l @{\;} l @{\;} l}
&\!\!\log&\!\!(\E X) -  \E\{\log(X)\}\\
&\leq &\!-\sum_{i=1}^{2k-1}\frac{1}{i}-\sum_{j=1}^{2k-1}(-1)^{j}\E X^{-j}(\E X)^{j}\sum_{i=j}^{2k-1}\frac{1}{i}{i \choose j}.
\end{array}
\end{equation}
Then, using the formula $\sum_{i=j}^n \frac{1}{i}{i\choose j}= \frac{1}{j}{n \choose j}$, we obtain the following inequality:
\begin{equation}\label{eq:simpleboundsimp_04}
\begin{array}{@{}l @{\;} l @{\;} l}
\log(\E &\!\!X&\!\!\!)  -  \E\{\log(X)\} \\
&\leq & - \sum_{j=1}^{2k-1}\frac{1}{j}+\frac{(-1)^{j}}{j}{2k-1 \choose j}\E X^{-j}(\E X)^{j},  
\end{array}
\end{equation}
which ends the proof.
\end{proof} 

Note that rescaling the random variable $X$ by a positive constant $a$ (i.e., $X\to aX$) does not change the lower and upper bounds in either \eqref{eq:improveboundslog1} or \eqref{eq:improvedboundslog}.

In the following paragraphs, we examine two essential examples of gamma-distributed and log-normally-distributed random variables. The first case refers to an illustrative example studied in detail in \citep{pmlr-v206-struski23a}, while the other (more important) case concerning log-normally distributed variables can be considered as a basis for our experiments on real-world data (see Subsection \ref{subsec:experiments}), and is therefore followed by necessary additional theoretical outcomes. For a comparison of these results with those that can be obtained using the methods of \citep{liao2019sharpening,lee2021further}, \citep{NEURIPS2020_3ac48664}, and \citep{pmlr-v206-struski23a}, see Section \ref{sec:applications}.

\paragraph{Bounds for Gamma Distributed Random Variable} Consider the random variable $X\sim \text{Gamma}(a,\theta)$, where $a>0$ and $\theta>0$ are shape and scale parameters. Then  $X^{-1}\sim \text{Inv-Gamma}(a,1/\theta)$ and consequently $\E X^j=\theta^j\frac{\Gamma(a+j)}{\Gamma(a)}$ for any integer $j>-a$. Thus, from \eqref{eq:improvedboundslog} and by simple calculation, we obtain the respective bounds for Jensen's gap:
\begin{equation}\label{eq:simpleboundsimpgamma}
\begin{array}{@{}l @{\;} l @{\;} l}
\sum_{j=1}^{2k-1}\frac{1}{j} & + & b_{k,j}\frac{\Gamma(a+j)}{\Gamma(a) a^j}\leq \mathcal{JG}(-\log,X)\\
&\leq & -\sum_{j=1}^{2k-1}\frac{1}{j}+b_{k,j}\frac{\Gamma(a-j )a^j}{\Gamma(a)},  
\end{array}
\end{equation}
provided (only for the upper bound) that $a> 2k-1$.

\paragraph{Bounds for Log-normally Distributed Random Variable} Consider the random variable $X\sim \text{Lognormal}(\mu,\sigma)$ (which means that $\log(X)\sim \text{Normal}(\mu,\sigma)$). Then $\E X^j=\exp(j\mu+\frac{1}{2}j^2\sigma^2)$ for any integer $j$. Thus, we can directly calculate both the exact size of Jensen's gap (see also Theorem 7 in \citep{pmlr-v206-struski23a}):
\begin{equation}\label{eq:jensensgaplognormal}
\begin{array}{@{}l @{\;} l @{\;} l}
\mathcal{JG}(-\log,X) & = & \log(\E X)-\E\{\log(X)\} \\
&= & -\mu+\mu+\frac{1}{2}\sigma^2=\frac{1}{2}\sigma^2, 
\end{array}
\end{equation}
and, applying \eqref{eq:improvedboundslog}, the respective estimates:
\begin{equation}\label{eq:improvedboundslog_for_lognormal}
\begin{array}{@{}l @{\;} l @{\;} l}
\sum_{j=1}^{2k-1}\frac{1}{j} & + & b_{k,j}\exp\{\frac{1}{2}(j^2 -  j)\sigma^2\} \leq \mathcal{JG}(-\log,X) \\
&\leq & -\sum_{j=1}^{2k-1}\frac{1}{j}+b_{k,j}\exp\{\frac{1}{2}(j^2+j)\sigma^2\}.
\end{array}
\end{equation}

It is easy to see that if we have a sequence of random variables $X_n\sim \text{Lognormal}(\mu,\sigma_n)$ with variances tending to $0$ (which implies that $\sigma_n\to 0$), then both Jensen's gap $\mathcal{JG}(-\log,X_n)$ and the respective lower and upper bounds (computed by \eqref{eq:improvedboundslog_for_lognormal} for any $k$) tend to 0. Indeed, the Euler integral representation of the harmonic number $H_n=\sum_{j=1}^n \frac{1}{j}$, given by the following formula (see~\citep{sandifer2006euler}):
\begin{equation}
\begin{array}{@{}l @{\;} l @{\;} l}
    H_n=\int_0^1 \frac{1-x^n}{1-x} dx,  
\end{array}
\end{equation}
allows us to calculate that
\begin{equation}
\begin{array}{@{}l @{\;} l @{\;} l}
    \sum_{j=1}^{2k-1}&\frac{1}{j} &= \int_0^1 \frac{1-x^{2k-1}}{1-x} dx=\int_{0}^1 \frac{1-(1-u)^{2k-1}}{u} du\\
&=&\int_0^1\sum_{j=1}^{2k-1}{2k-1 \choose j}(-u)^{j-1}du\\
&=& \sum_{j=1}^{2k-1}{2k-1 \choose j}\frac{(-1)^{j-1}}{j}=-\sum_{j=1}^{2k-1}b_{k,j},  
\end{array}
\end{equation}
which leads to the conclusion.

We would like to highlight that the aforementioned property is particularly important for our experiments, as it motivates the development of the method of rigorous estimation of the log-likelihood presented in Subsection \ref{subsec:loglikelihoodestimations}.

\paragraph{Bounds Tightening} Theorem \ref{thm:improvedlog} and the importance sampling technique (see \citep{burda2015importance,pmlr-v206-struski23a}) allow us to obtain tight lower and upper bounds on $\log(\E X)$ when $X$ is a positive random variable. To prove this, we rewrite the estimates from \eqref{eq:improvedboundslog} in the following form:
\begin{equation}\label{eq:improveadditivedboundslog}
\begin{array}{@{}l @{\;} l @{\;} l} \E\{\sum_{j=1}^{2k-1}\frac{1}{j} & + & b_{k,j}\frac{X^j}{(\E X)^j}\} \leq  
 \log(\E X)-\E\{\log(X)\}\\
& \leq & \E\{-\sum_{j=1}^{2k-1}\frac{1}{j}-b_{k,j}\frac{(\E X)^j}{X^j}\}.
\end{array}
\end{equation}
Then we replace random variable $X$ with its $n$-sample mean, i.e., $X \to \overline X$ (note that we therefore need to copy $X$ independently $n$ times) and  add $\E\{\log(X)\}$ to all sides. 
Since $\E X=\E \overline X$, this leads to the following estimates:
\begin{equation}\label{eq:improveadditivedboundslogmeans}
\begin{array}{@{}l @{\;} l @{\;} l}
\E&\!\!\{&\!\!\!\!\!\sum_{j=1}^{2k-1}\frac{1}{j}  +   b_{k,j}\frac{\overline X^j}{(\E X)^j} + 
 \log(\overline X)\} \leq  \log(\E X)\\
&  \leq & \E\{-\sum_{j=1}^{2k-1}\frac{1}{j}-b_{k,j}\frac{(\E \overline X )^j}{\overline{X}^j}+\log(\overline X)\}.  
\end{array}
\end{equation}

In \citep{mouri2013log} it is shown that the sum of positive random variables converges to a log-normal distribution (even faster than to any Gaussian distribution, which is ensured by the central limit theorem) as their total number tends to infinity. On the other hand, the variance of the $n$-sample mean converges to 0 as $n\to \infty$. Thus, if $n$ is large, we can treat the estimates given by \eqref{eq:improveadditivedboundslogmeans} as bounds for the log-normal random variable $\overline X$, which are known to be tight (see the previous paragraph), giving the assertion. However, we do not provide theoretical guarantees of convergence. Instead, we provide appropriate empirical arguments (see the ablation study provided in Subsection~\ref{subsec:experiments}).

We emphasize that the above result is not based on the assumption that the support of $X$ lies in a compact interval contained in $(0,\infty)$. (This is particularly important because in practical applications of real-world data, the underlying distributions do not have this property.) Moreover, our experiments confirm that it allows us to obtain superior bounds in a number of situations. Thus, it can be considered a significant improvement over Theorem 5 of \citep{pmlr-v206-struski23a}.

\section{Applications}\label{sec:applications}

To prove the applicability of the theoretical results presented in Section \ref{sec:theory}, we examine them with respect to state-of-the-art solutions. We begin with the toy examples provided in \citep{liao2019sharpening} and \citep{lee2021further} for the exponential function and common continuous distributions (i.e., exponential and Gaussian), and then compare our method with those presented in \citep{NEURIPS2020_3ac48664} and \citep{pmlr-v206-struski23a}. In particular, we demonstrate that our method extends the second-order Jensen's gap approximation proposed in \citep{NEURIPS2020_3ac48664}. Moreover, we conduct experiments on real-world data using an experimental benchmark from \cite{pmlr-v206-struski23a}, which allow us to validate our approach as a novel method for estimating the log-likelihood of probabilistic models. The source code is publicly available at \url{https://github.com/gmum/variational-jensen}.

\subsection{Examples for Exponential Function}\label{subsec:expexp}

In this subsection, we relate our results to those of two recent papers on sharpening Jensen's inequality, i.e., \citep{liao2019sharpening,lee2021further}. Although these works also provide lower and upper bounds for Jensen's gap based on Taylor's expansion, their applicability is influenced by the fact that they often yield boundary values (i.e., 0 and $\infty$) when used to compute bounds for common continuous (e.g., Gaussian) distributions, which cannot happen when our method is applied. Furthermore, these approaches rely on the computation of high order central moments, whereas our bounds are expressed in terms of both central and raw moments, thereby rendering our technique more accessible. (We emphasize that while calculating raw moments from central moments is always possible, the process becomes increasingly complex and tedious as the order increases.).

In the following two paragraphs, we refer to the examples presented in  \citep{liao2019sharpening,lee2021further} for exponential function $f$.

\paragraph{Bounds for Exponentially Distributed Random Variable}

Let $f(x)=\exp(\frac{1}{2}x)$ and $X\sim \text{Exponential}(1)$. The exact size of Jensen's gap is then calculated as follows:
\begin{equation}\label{eq:jensensgapexp}
\begin{array}{@{}l @{\;} l @{\;} l}
\mathcal{JG}(f,X) & = & 2-\sqrt{e}\approx 0.351.
\end{array}
\end{equation}
On the other hand, we have the following estimates:
\begin{equation}\label{eq:jensensgapexpest}
\begin{array}{@{}l @{\;} l @{\;} l}
\sqrt{e}\sum_{i=1}^{2k-1} & \!\!\frac{1}{2^i} & \!\sum_{j=0}^i\frac{(-1)^{i-j}}{(i-j)!}
\leq  \mathcal{JG}(f,X) \\
&\leq & \sum_{i=1}^{2k-1}\sum_{j=0}^i\frac{1}{2^{j-1}}\frac{(-1)^{i-j+1}}{j!},
\end{array}
\end{equation}
which can be obtained by using \eqref{eq:simpleboundsimp} and some simple recalculations. Indeed, note that
\begin{equation}
\begin{array}{@{}l @{\;} l @{\;} l}
    f^{(i)}(X) =(\frac{1}{2})^i\exp(\frac{1}{2}X)
\end{array}
\end{equation}
and therefore
\begin{equation}
\begin{array}{@{}l @{\;} l @{\;} l}
    \E\{f^{(i)}(X)&\!\!X&\!\!\!^{i-j}\} = \E\{(\frac{1}{2})^i\exp(\frac{1}{2}X) X^{i-j}\} \\
    &= & (\frac{1}{2})^i\int_0^{\infty} \exp(\frac{1}{2}x)x^{i-j}\exp(-x)dx\\
&= &(\frac{1}{2})^{i-1}\int_0^{\infty}x^{i-j}\frac{1}{2}\exp(-\frac{1}{2}x)dx\\
&=&(\frac{1}{2})^{i-1}\frac{(i-j)!}{(\frac{1}{2})^{i-j}}
=(\frac{1}{2})^{j-1}(i-j)!. 
\end{array}
\end{equation}
Thus, by applying \eqref{eq:simpleboundsimp}, we conclude that
\begin{equation}
\begin{array}{@{}l @{\;} l @{\;} l}
    \mathcal{JG}(f,X) &\leq &\sum_{i=1}^{2k-1}\sum_{j=0}^i\frac{(-1)^{i-j+1}}{j!(i-j)!}\E\{f^{(i)}(X)X^{i-j}\}(\E X)^j \\ 
    &=&
\sum_{i=1}^{2k-1}\sum_{j=0}^i\frac{(-1)^{i-j+1}}{j!(i-j)!}(\frac{1}{2})^{j-1}(i-j)!1^j\\
&=&\sum_{i=1}^{2k-1}\sum_{j=0}^i\frac{1}{2^{j-1}}\frac{(-1)^{i-j+1}}{j!}
\end{array}
\end{equation}
and
\begin{equation}
\begin{array}{@{}l @{\;} l @{\;} l}
\mathcal{JG}(f,X) &\geq &    \sum_{i=1}^{2k-1}\sum_{j=0}^i\frac{(-1)^{i-j}}{j!(i-j)!}f^{(i)}(\E X)\E X^j(\E X)^{i-j} \\ 
 &=&
 \sum_{i=1}^{2k-1}\sum_{j=0}^i\frac{(-1)^{i-j}}{j!(i-j)!}(\frac{1}{2})^i\exp(\frac{1}{2})\frac{j!}{1^j}1^{i-j}\\
 &=&\sqrt{e}\sum_{i=1}^{2k-1} \frac{1}{2^i} \sum_{j=0}^i\frac{(-1)^{i-j}}{(i-j)!}. 
\end{array}
\end{equation}

\begin{figure}[t!]
    \centering
    \includegraphics[width=\columnwidth]{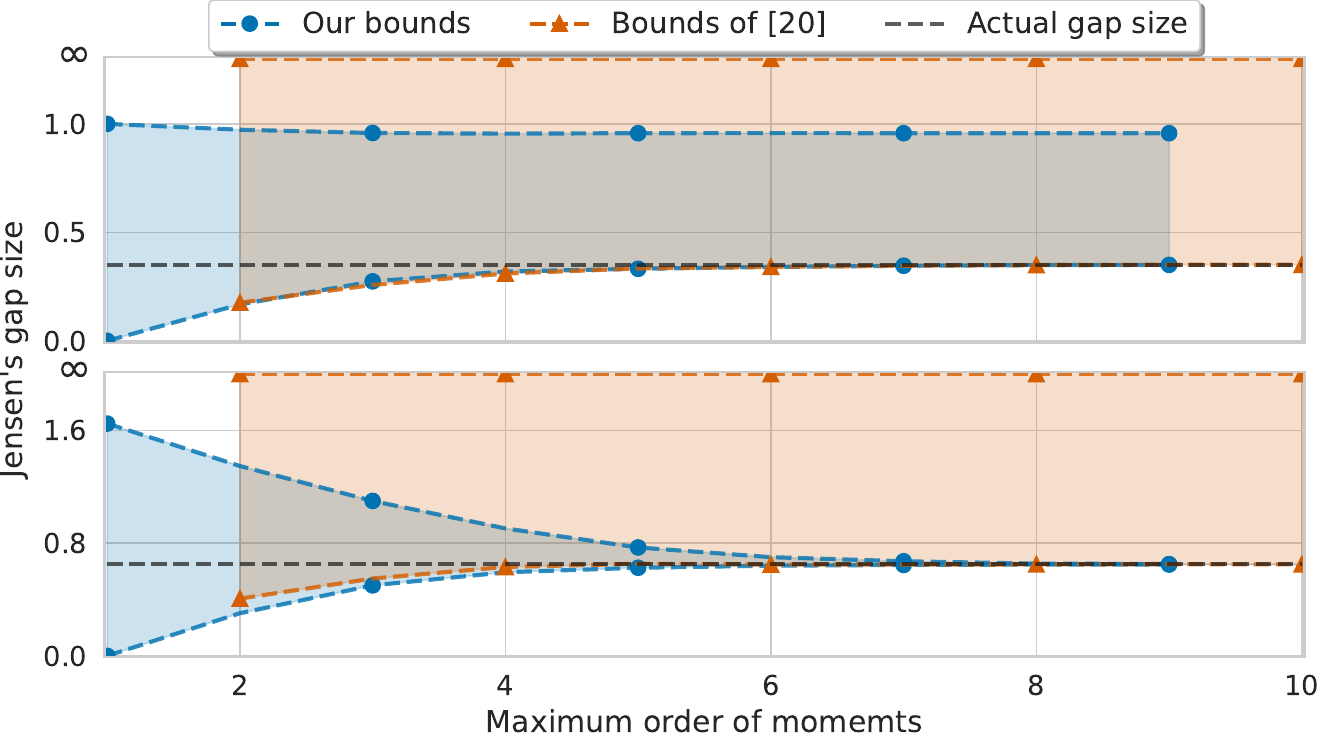}
    \caption{Lower and upper bounds on Jensen's gap given by our method and that of \citep{lee2021further} for $f(x)=\exp(\frac{1}{2}x)$ and $X\sim \text{Exponential}(1)$ (top), and for $f(x)=\exp(x)$ and $X\sim \text{Normal}(0,1)$ (bottom), vs. the maximum order of moments used in the calculations (i.e., $2k-1$ in the case of our method). 
    The dashed lines between the points represent second-degree polynomial interpolation. It should be noted that the upper bounds provided in \citep{lee2021further} are infinite.}
    \label{fig:liaolee1}
\end{figure}

In the upper part of Figure~\ref{fig:liaolee1} we compare the bounds given by \eqref{eq:jensensgapexpest} with those obtained in \citep{lee2021further}. (We emphasize that the method of \citep{liao2019sharpening} coincides with the method of \citep{lee2021further} of the first order.) 
It is clear that as the maximum number of moments used in the computations increases, our approach becomes superior (or at least comparable) while avoiding infinity. Note, however, that in \citep{lee2021further} the bounds are expressed using even order moments (see Theorem 2.1 in \citep{lee2021further}), whereas we use odd order moments. Consequently, although Figure~\ref{fig:liaolee1} suggests a comparison with missing moments interpolated by a second-degree polynomial, our method and that of \citep{lee2021further} should be considered as complementary rather than competitive in this case.

\paragraph{Bounds for Normally Distributed Random Variable}

We perform analogous calculations for $f(x)=\exp(x)$ and Gaussian (continuous) random variable $X\sim \text{Normal}(0,1)$. In this case, the exact Jensen's gap and the corresponding bounds are as follows:
\begin{equation}\label{eq:jensensgapexpnormal}
\begin{array}{@{}l @{\;} l @{\;} l}
\mathcal{JG}(f,X) & = & \sqrt{e}-1\approx 0.649
\end{array}
\end{equation}
and
\begin{equation}\label{eq:jensensgapexpnormalest}
\begin{array}{@{}l @{\;} l @{\;} l}
\sum_{i=1}^{2k-1}\frac{\E X^i}{i!}
&\leq & \mathcal{JG}(f,X)
 \leq  \sqrt{e} \sum_{i=1}^{2k-1}\frac{(-1)^{i+1}}{i!}\E Y^i,
\end{array}
\end{equation}
where $Y\sim \text{Normal}(1,1)$.
Indeed, note that
\begin{equation}
\begin{array}{@{}l @{\;} l @{\;} l}
    \E\{f^{(i)}(X)&\!X&\!\!^{i-j}\}=\E\{\exp(X)X^{i-j}\}\\
    &=&\int_{-\infty}^{\infty} \exp(x)x^{i-j}\frac{1}{\sqrt{2\pi}}\exp(-\frac{1}{2}x^2)dx\\
&=&\sqrt{e}\int_{-\infty}^{\infty} x^{i-j}\frac{1}{\sqrt{2\pi}}\exp\{-\frac{1}{2}(x-1)^2\}\\
&=&\sqrt{e}\, \E Y^{i-j},
\end{array}
\end{equation}
and therefore, since $\E X=0$, we can use \eqref{eq:simpleboundsimp2} to calculate the respective bounds in the following way:
\begin{equation}
\begin{array}{@{}l @{\;} l @{\;} l}
    \sum_{i=1}^{2k-1}\frac{1}{i!}f^{(i)}(0)&\!\E& \!\!\!X^i = \sum_{i=1}^{2k-1}\frac{\E X^i}{i!} \leq \mathcal{JG}(f,X) \\ 
    &\leq & \sqrt{e} \sum_{i=1}^{2k-1}\frac{(-1)^{i+1}}{i!}\E Y^i\\
    &=&-\sum_{i=1}^{2k-1}\frac{(-1)^{i}}{i!}\E\{f^{(i)}(X)X^{i}\}.
\end{array}
\end{equation}

The comparison of the bounds given in \eqref{eq:jensensgapexpnormalest} with those of \citep{lee2021further} is shown in the lower part of  Figure~\ref{fig:liaolee1}. Unlike the competitor, our approach provides tighter and tighter upper bounds as the maximum number of moments used in the computations increases. (Note that in our method there is no need to apply the support partitioning technique.) 

\subsection{Higher Order Bounds for Model Averaging and PAC-Bayes}

In this subsection, we relate our results to those presented in \citep{NEURIPS2020_3ac48664}. Since Theorem~\ref{thm:improvedlog} provides explicit, moment-based upper and lower bounds on Jensen's gap for the negative logarithmic function with a controllable expansion order via the parameter $k$, it allows for the systematic tightening of variational and PAC-Bayes bounds for cross-entropy losses, particularly in the context of probabilistic models.

To formalize this setup, let the training dataset be denoted by $\mathcal{D}=\{x_1,\ldots,x_n\}$, where each $x_i$ belongs to the data space $\mathcal{X}$. The conditional data model is given by a distribution $p(x|\theta)$ parameterized by $\theta$. We assume that the training data are sampled independently from a random variable $X$ with an unknown data-generating distribution $\nu$. We define the posterior predictive distribution induced by a probability distribution $\varrho$ over the parameters $\theta$ as 
$p(x) = \mathbb{E}_{\theta \sim \varrho}[p(x|\theta)]$.
If $\varrho$ is the Bayesian posterior, this simply reduces to Bayesian model averaging.

Our key quantity of interest is the generalization risk (cross-entropy), i.e,
\begin{equation}
\begin{array}{@{}l @{\;} l @{\;} l}
\textit{CE}(\varrho) &= &\mathbb{E}_{X \sim \nu}\{-\log \mathbb{E}_{\theta \sim \varrho}[p(X|\theta)]\}.
\end{array}
\end{equation}
We aim to identify the optimal probability distribution $\varrho^*$ over $\theta$ for model averaging, with the goal of achieving the best generalization performance. Specifically, we seek the distribution $\varrho^*$ that minimizes the cross-entropy loss. Formally, this is expressed as follows:
\begin{equation}
\begin{array}{@{}l @{\;} l @{\;} l}
\varrho^* &=& \arg\min_{\varrho} \textit{CE}(\varrho).
\end{array}
\end{equation}
Since the true data-generating distribution \(\nu\) is unknown, we aim to approximate the optimal posterior \(\varrho^*\) by minimizing a PAC-Bayes upper bound on the cross-entropy loss $\textit{CE}(\varrho)$, which depends on the observed dataset \(\mathcal{D}\). While traditional variational and PAC-Bayesian approaches bound this expression by the expected log-loss via Jensen's inequality, which only provides a first order bound, this can be very loose.

To overcome this limitation, we propose to apply Theorem~\ref{thm:improvedlog}, which leads to a series of increasingly tighter approximations to the cross-entropy loss. Note that our formulation is somewhat related to the second order bound explored in \citep{NEURIPS2020_3ac48664} (see Theorem 2 therein), but significantly extends this idea by allowing arbitrary higher order corrections, offering a more flexible and accurate approximation framework (see Figure~\ref{fig:extended_jensen_bounds}).

\begin{thm}
\label{thm:JensenkGAP}
For an arbitrary distribution $\varrho$ over $\theta$ and for any positive integer $k$, the following inequality hold:
\begin{equation}
\!\!\begin{array}{@{}l @{\;} l @{\;} l}
\textit{CE}(&\!\!\varrho&\!\!\!\!\!)\leq  \mathbb{E}_{\theta \sim \varrho}[L(\theta)]\\ &- &\sum_{i=1}^{2k-1}\frac{(-1)^i}{i}  \mathbb{E}_{X \sim \nu}\left\{\frac{\mathbb{E}_{\theta \sim \varrho}[(p(X|\theta)-\mu(X))^i]}{\mu(X)^{i}}\right\},
\end{array}
\end{equation}
where $L(\theta) = \mathbb{E}_{X \sim \nu}[-\log p(X|\theta)]$ is the log-loss of the model and $\mu(x)=\mathbb{E}_{\theta \sim \varrho}[p(x|\theta)]$, provided that appropriate finite expected values exist.
\end{thm}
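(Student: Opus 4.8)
The plan is to apply the lower bound of Theorem~\ref{thm:improvedlog} conditionally on the data point and then average over the data-generating distribution $\nu$. First I would fix $x\in\mathcal{X}$ and regard $Z_x := p(x\,|\,\theta)$, with $\theta\sim\varrho$, as a positive random variable whose mean is $\E_{\theta\sim\varrho}[Z_x]=\mu(x)$. Since $\mathcal{JG}(-\log,Z_x)=\E_{\theta\sim\varrho}\{-\log p(x\,|\,\theta)\}+\log\mu(x)$ by the definition of Jensen's gap, the left-hand inequality in \eqref{eq:improveboundslog1}, applied with $X$ replaced by $Z_x$, reads
\[
\sum_{i=1}^{2k-1}\frac{(-1)^i}{i}\,\frac{\E_{\theta\sim\varrho}\{(p(x\,|\,\theta)-\mu(x))^i\}}{\mu(x)^{i}}\;\leq\;\E_{\theta\sim\varrho}\{-\log p(x\,|\,\theta)\}+\log\mu(x),
\]
and rearranging yields, for every $x$, a pointwise upper bound on $-\log\mu(x)$.

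Next I would take $\E_{X\sim\nu}$ of both sides of that pointwise bound. The left-hand side, $\E_{X\sim\nu}[-\log\mu(X)]$, is by definition $\textit{CE}(\varrho)$. On the right-hand side, the first term $\E_{X\sim\nu}\E_{\theta\sim\varrho}\{-\log p(X\,|\,\theta)\}$ equals $\E_{\theta\sim\varrho}\{\E_{X\sim\nu}[-\log p(X\,|\,\theta)]\}=\E_{\theta\sim\varrho}[L(\theta)]$ after exchanging the order of the two expectations, while the second term is a finite sum and therefore passes through $\E_{X\sim\nu}$ by linearity, producing $\sum_{i=1}^{2k-1}\frac{(-1)^i}{i}\,\E_{X\sim\nu}\{\E_{\theta\sim\varrho}[(p(X|\theta)-\mu(X))^i]/\mu(X)^i\}$. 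Collecting the two contributions gives exactly the asserted inequality. Note that only the lower bound on $\mathcal{JG}(-\log,\cdot)$ is needed here, since the claim is a one-sided (upper) bound on $\textit{CE}(\varrho)$.

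The proof is essentially bookkeeping with nested expectations, so I do not anticipate a deep obstacle; the one point that needs care is the integrability accounting implicit in the hypotheses. Concretely, for each $x$ one needs $Z_x$ and $\log Z_x$ to have finite first moments (so that $\mathcal{JG}(-\log,Z_x)$ is well defined and Theorem~\ref{thm:improvedlog} applies), the central moments $\E_{\theta}[(p(x|\theta)-\mu(x))^i]/\mu(x)^i$ together with their $\nu$-averages to be finite, and $\E_{X}\E_{\theta}[-\log p(X|\theta)]$ to be finite --- exactly the conditions folded into the proviso ``provided that appropriate finite expected values exist.'' Granting these, the Fubini--Tonelli exchange in the first term is legitimate, though since $-\log p(X|\theta)$ is not sign-definite one should split it into positive and negative parts (or assume joint integrability) before swapping the integrals. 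Beyond this the argument is immediate.
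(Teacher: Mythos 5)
Your argument is correct and matches the paper's proof, which simply cites the left inequality in \eqref{eq:improveboundslog1}: you apply that lower bound pointwise to $Z_x=p(x|\theta)$, rearrange, and average over $X\sim\nu$ with a Fubini swap in the log-loss term. Your additional care about the integrability hypotheses is a reasonable elaboration of the paper's ``provided that appropriate finite expected values exist'' proviso, but the route is the same.
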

\begin{proof} It follows from the left inequality in \eqref{eq:improveboundslog1}.
\end{proof}

We emphasize that direct minimization of the oracle bound provided by Theorem~\ref{thm:JensenkGAP} is not feasible, as it relies on expectations with respect to the true data distribution $\nu$, which is unknown in practice. To overcome this limitation, we can shift to the PAC-Bayes framework, which enables the construction of data-dependent generalization bounds. Unlike the oracle bound, PAC-Bayes bounds can be empirically estimated and provide probabilistic guarantees, making them both practically and theoretically sound. In the PAC-Bayes approach, we first select a prior distribution $\pi$ over $\theta$, which represents our initial beliefs before any data has been observed. After observing the data, we consider a posterior distribution $\varrho$ over $\theta$, reflecting updated beliefs. The goal is to bound the expected generalization risk $\textit{CE}(\varrho)$ in terms of the empirical risk and a complexity term that penalizes deviation from the prior, typically measured by the Kullback-Leibler divergence $\textit{KL}(\varrho | \pi)$.

The classical PAC-Bayes claim states that with probability at least $1 - \delta$ over the draw of the dataset $\mathcal{D}$, the following inequality holds:
\begin{equation}
\begin{array}{@{}l @{\;} l @{\;} l}
\textit{CE}(\varrho) &\leq & \mathbb{E}_{\theta \sim \varrho}[\hat{L}(\theta, \mathcal{D})] + \text{complexity},
\end{array}
\end{equation}
where $\hat{L}(\theta, \mathcal{D})$ is the empirical log-loss and the complexity term typically includes $\textit{KL}(\varrho | \pi)$ scaled by the sample size $n$ and a $\log(1/\delta)$ confidence adjustment---for a concrete example, see Theorem 1 in \citep{NEURIPS2020_3ac48664}. However, such a first order upper bound can be quite loose, because the expected log-loss $\mathbb{E}_{\theta\sim \varrho}[L(\theta)]$ bounds the true cross-entropy risk $\textit{CE}(\varrho)$ via Jensen's inequality, ignoring higher order information. By applying Theorem~\ref{thm:JensenkGAP}, we can obtain refined, data-dependent PAC-Bayes bounds that correct for this slackness using moment-based corrections. For example, applying the theorem with $k=2$ (third order correction), we obtain the following inequality:
\begin{equation}
\begin{array}{@{}l @{\;} l @{\;} l}
\textit{CE}(\varrho) & \leq & \mathbb{E}_{\theta\sim \varrho}[\hat{L}(\theta, \mathcal{D})] - \frac{1}{2n}\sum_{i=1}^n \frac{\mathbb{E}_{\theta\sim \varrho}[(p(x_i|\theta)-\mu_i)^2]}{\mu_i^2} \\
& + & \frac{1}{3n}\sum_{i=1}^n  \frac{\mathbb{E}_{\theta\sim \varrho}[(p(x_i|\theta)-\mu_i)^3]}{\mu_i^3} + \text{complexity},
\end{array}
\end{equation}
where $\mu_i = \mathbb{E}_{\theta\sim \varrho}[p(x_i|\theta)]$.
Although we do not explicitly formulate the resulting PAC-Bayes bounds in this work, such corrections are theoretically compatible with the PAC-Bayes framework and are a promising direction for future development.

The practical impact of the higher order corrections from Theorem~\ref{thm:JensenkGAP} is illustrated in Figure~\ref{fig:extended_jensen_bounds}, which extends the binomial model-averaging experiment proposed in \citep{NEURIPS2020_3ac48664} to include third/fifth-order bounds. Two scenarios are analyzed: \textit{misspecification} (no model $p(\cdot|\theta)$ matches true $\nu$) and \textit{perfect specification} (there exists $\theta^*$ with $p(\cdot|\theta^*)=\nu$). Models are mixed via $\varrho \in [0,1]$, where extreme values select single models and intermediate values weight averages. Under misspecification, cross-entropy loss minimizes at intermediate $\varrho^*$, supporting model averaging, while expected log-loss minimizes at extreme $\varrho^*=1$ via Dirac delta selection. Theorem~\ref{thm:JensenkGAP} enables progressively tighter bounds (third/fifth order), surpassing second-order bound from \citep{NEURIPS2020_3ac48664} in tracking true cross-entropy, achieving lower minima closer to $\varrho^*$. In perfect specification, cross-entropy loss and all bounds minimize at $\varrho^*=1$, confirming true model selection as optimal. Higher-order bounds align precisely with expected log-loss here, validating their consistency in well-specified scenarios. This illustrates how increasing bound orders improve approximation accuracy across both experimental settings, with practical implications for model selection and averaging strategies.

\subsection{Initial Comparison With the Results of \citep{pmlr-v206-struski23a}}\label{subsec:syntheticdataexperiments}

In this subsection, we focus on random variables with a gamma or log-normal distribution. Our goal is to compare the bounds computed by \eqref{eq:improveadditivedboundslog} with those presented in Corollary 4 of \citep{pmlr-v206-struski23a}, i.e.,
\begin{equation}\label{eq:improveboundslogstruski}
\begin{array}{@{}l @{\;} l @{\;} l}
0\leq \log(\E X)-\E\{\log(X)\}
&  \leq & \log (\E \frac{Y}{X}),
\end{array}
\end{equation}
where $Y$ is an independent copy of $X$. (Note that for such random variables and distributions, comparison with the other considered approaches is meaningless, since the lower bound given by Theorem 2.1 of \citep{lee2021further}, which is a generalization of Theorem 1 of \citep{liao2019sharpening}, coincides with that given by our method, while the upper bound is infinite.) In Figure  \ref{fig:theoretical_lognormal_synthetic}, we present the results of direct bounds computations using 
 \eqref{eq:simpleboundsimpgamma} and \eqref{eq:improvedboundslog_for_lognormal}. Note that for both situations considered, there are wide ranges of distribution parameters (depending on $k$) for which our method (compared to that of \citep{pmlr-v206-struski23a}) provides better bounds. (In fact, in \citep{pmlr-v206-struski23a} only a non-trivial upper bound for Jensen's gap in provided.) This motivates our experiments on log-likelihood estimation on real-world data, which are presented in the following subsections. 

\subsection{Log-Likelihood Estimation}\label{subsec:loglikelihoodestimations}

In this subsection, we present a novel method of log-likelihood estimation for arbitrary probabilistic models (with the latent).
Although our approach could be useful for different architectures, motivated by the work of \citep{pmlr-v206-struski23a}, we focus our attention on those with an autoencoder structure (such as VAEs). 

Let $p(x,z)$ be a joint model distribution on $\mathcal{X}\times \mathcal{Z}$, where $\mathcal{X}$ and $\mathcal{Z}$ denote data and latent spaces, respectively. 
In the case of an autoencoder-based model, where we also have a random encoder $q(z|x)$, a random decoder $p(x|z)$, and a prior $p(z)$, a model log-likelihood of a given data point $x\in \mathcal{X}$ is expressed as follows: 
\begin{equation}\label{eq:modelloglikelihood}
\begin{array}{@{}l @{\;} l @{\;} l}
\log [p(x)] & = & \int p(x,z) dz
=\log [\E_{Z \sim q(\cdot |x)} \frac{p(x,Z)}{q(Z|x)}]\\ 
&=& \log [\E_{Z \sim q(\cdot |x)} \frac{p(x|Z)p(Z)}{q(Z|x)}].
\end{array}
\end{equation}
Then, applying \eqref{eq:improveadditivedboundslogmeans} to the random variable $R_x(Z)=\frac{p(x|Z)p(Z)}{q(Z|x)}$ (for $Z\sim q(\cdot |x)$), we can compute rigorous lower and upper bounds on the log-likelihood of the model distribution in $x$ (note that this method requires the use of a number of independent copies of $R_x(Z)$, and hence $Z$).

Finally, we need to explore the conditions under which the bounds might be tight enough. To do this, we use the method described in the last paragraphs of Subsection \ref{subsec:boundslog}. Specifically, we approximate the distribution of the $n$-sample mean by $\text{Lognormal}(\mu,\sigma)$ with a small variance (hence a small $\sigma$), which requires taking $n$ sufficiently large. However, as the preceding analysis shows (see Subsection \ref{subsec:syntheticdataexperiments}), our method benefits more when $k$ is relatively small and $\sigma$ takes moderately small rather than very small values. Therefore, also considering that the computational complexity grow with increasing $k$ and $n$, some kind of trade-off would be desirable in this case. In summary, we postulate that computing lower and upper bounds for a model log-likelihood of a given data point $x\in \mathcal{X}$ using \eqref{eq:improveadditivedboundslogmeans} (as described above) should be based on the following criteria: (i) good log-normal approximation (see \citep{mouri2013log})---$n$ large enough, (ii) tight bounds---sufficiently small $\sigma$ (due to large $n$), (iii) clear advantage over existing methods---relatively small $k$ and balanced $\sigma$ (due to balanced $n$), and (iv) reasonable computational complexity---possibly small $k$ and $n$. With respect to the results presented in the previous subsection, at this stage, we can set reasonable values for $k$ as 2 or 3 (see Figure \ref{fig:theoretical_lognormal_synthetic}). 
On the other hand, determining an appropriate $n$ requires further analysis of the (data-dependent) distribution of the random variable $R_x(Z)$ (see the following subsection).

\begin{figure}[t!]
  \centering
  \includegraphics[width=\columnwidth]{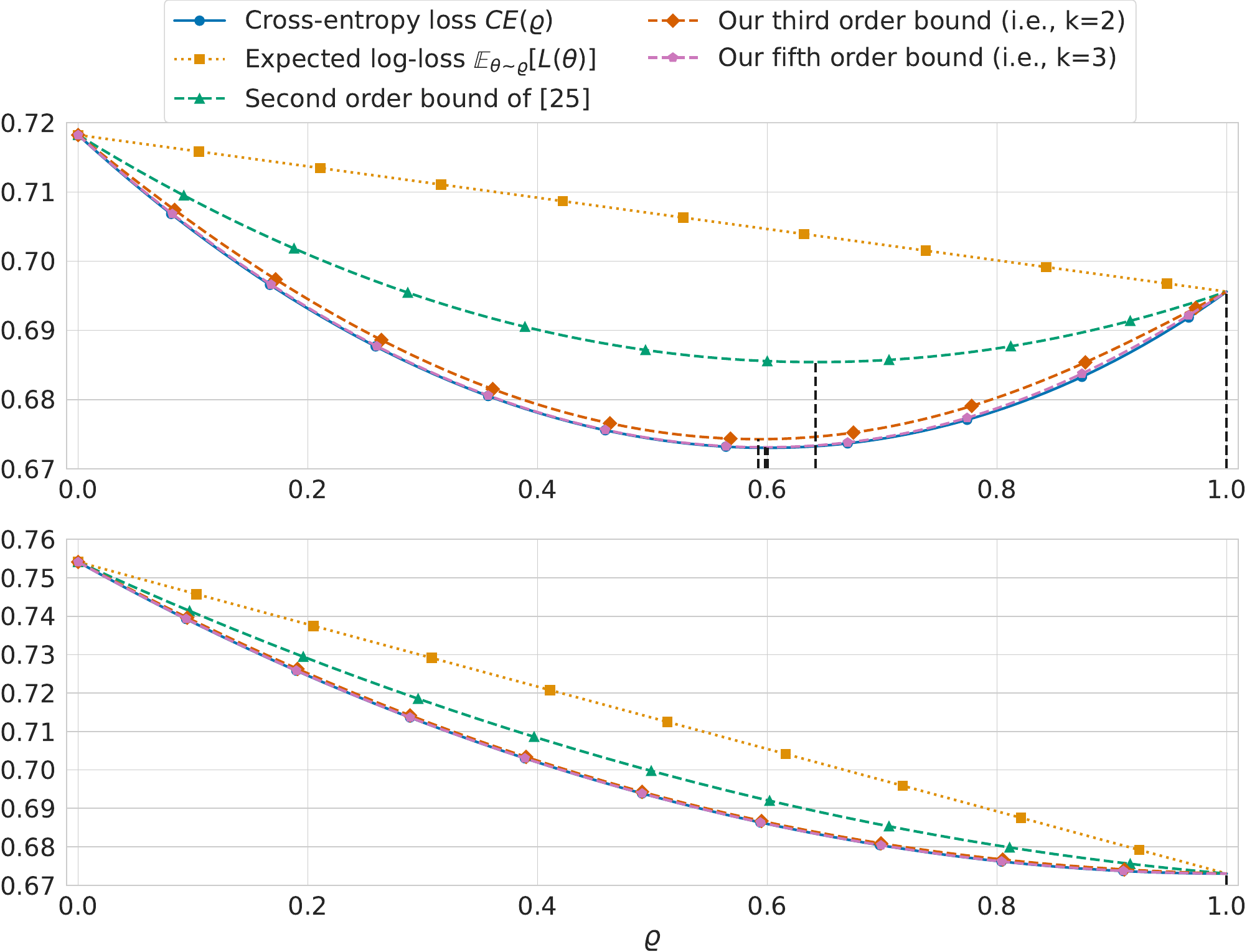}
  \caption{Upper bounds on the cross-entropy loss for the model misspecification setting (top) and the perfect model setting (bottom). The experimental setup from \citep{NEURIPS2020_3ac48664} was applied (see Figure 2 and Appendix B therein). Note that higher order bounds (ours) systematically improve the tightness and more accurately reflect the true minimum.}
  \label{fig:extended_jensen_bounds}
\end{figure}

\begin{figure}[t!]
   \centering
   \includegraphics[width=\columnwidth]{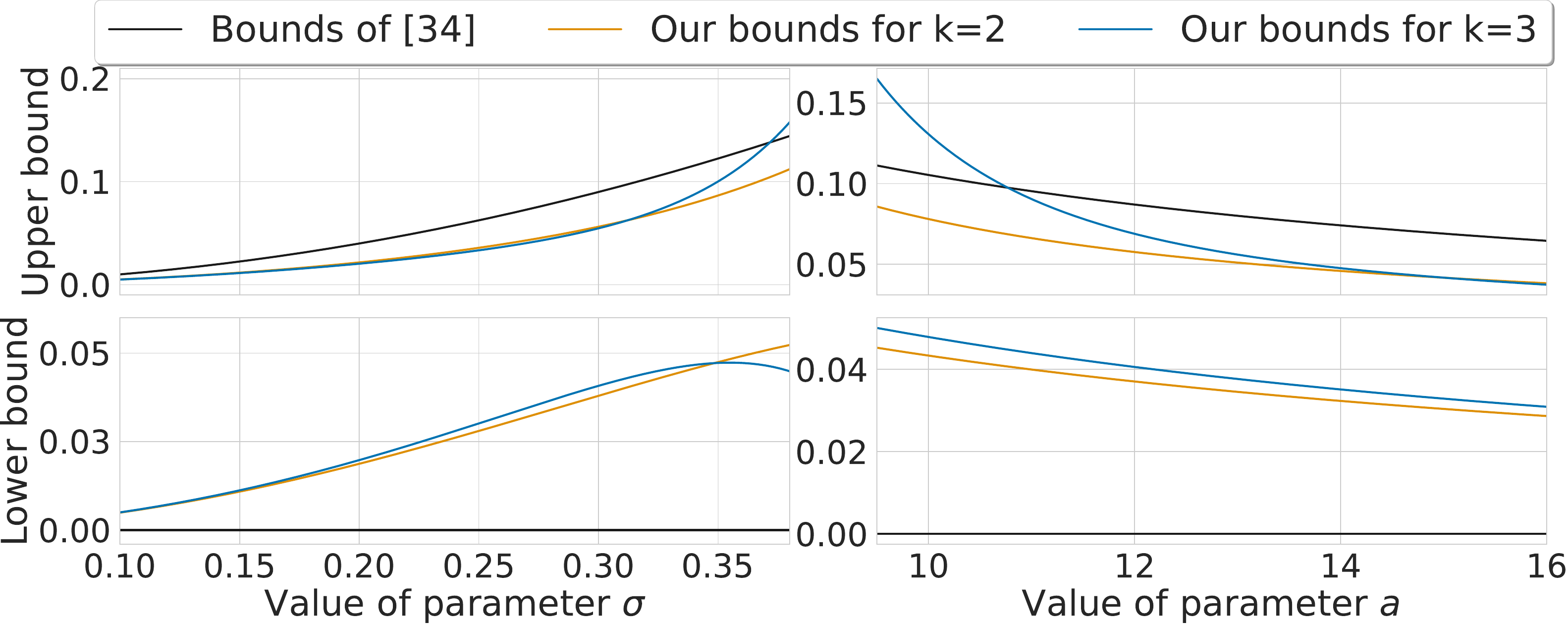}
      \caption{Lower and upper bounds on Jensen's gap for $f=-\log$ and $X\sim \text{Lognormal}(\mu,\sigma)$ (left) or $X\sim \text{Gamma}(a,\theta)$ (right), vs. different values of the respective distribution parameters, rigorously computed using our method and the method of \citep{pmlr-v206-struski23a}.}\label{fig:theoretical_lognormal_synthetic}
\end{figure}

\subsection{Experiments on Real-World Data}\label{subsec:experiments}

In this subsection, we present the results of experiments conducted to compare our log-likelihood estimation method with the state-of-the-art, which is represented by the recent experimental benchmark of \citep{pmlr-v206-struski23a} with VAE \citep{kingma2013auto,rezende2015variational}, IWAE-5, and IWAE-10 \citep{burda2015importance} models pre-trained on the MNIST~\citep{lecun1998gradient}, SVHN~\citep{netzer2011reading}, and CelebA~\citep{liu2015deep} datasets. In addition, we provide an ablation study that illustrates the impact of the log-likelihood distribution on the applicability of our approach.

Following the conclusions of the previous subsection, to compute our proposed lower and upper bounds by applying \eqref{eq:improveadditivedboundslogmeans} to the random variable $R_x(Z)$ that depends on a given data point $x$, we first try to satisfy postulates (i)--(iv) by selecting appropriate $n$ via an appropriate grid search procedure. 
Then (taking $k\in \{2,3\}$) we compute from \eqref{eq:improveadditivedboundslogmeans} the lower and upper bounds for $\log \E_{Z\sim q(\cdot|x)}R_x(Z)$.  Note that the upper bound on the respective Jensen's gap 
can be considered as a measure of the tightness of the estimation, so it is further reported. (In fact, we report the value computed using the sample mean estimator for each expectation, averaged over all data points $x$.)

\begin{figure}[t!]
\centering
\includegraphics[width=\columnwidth]
{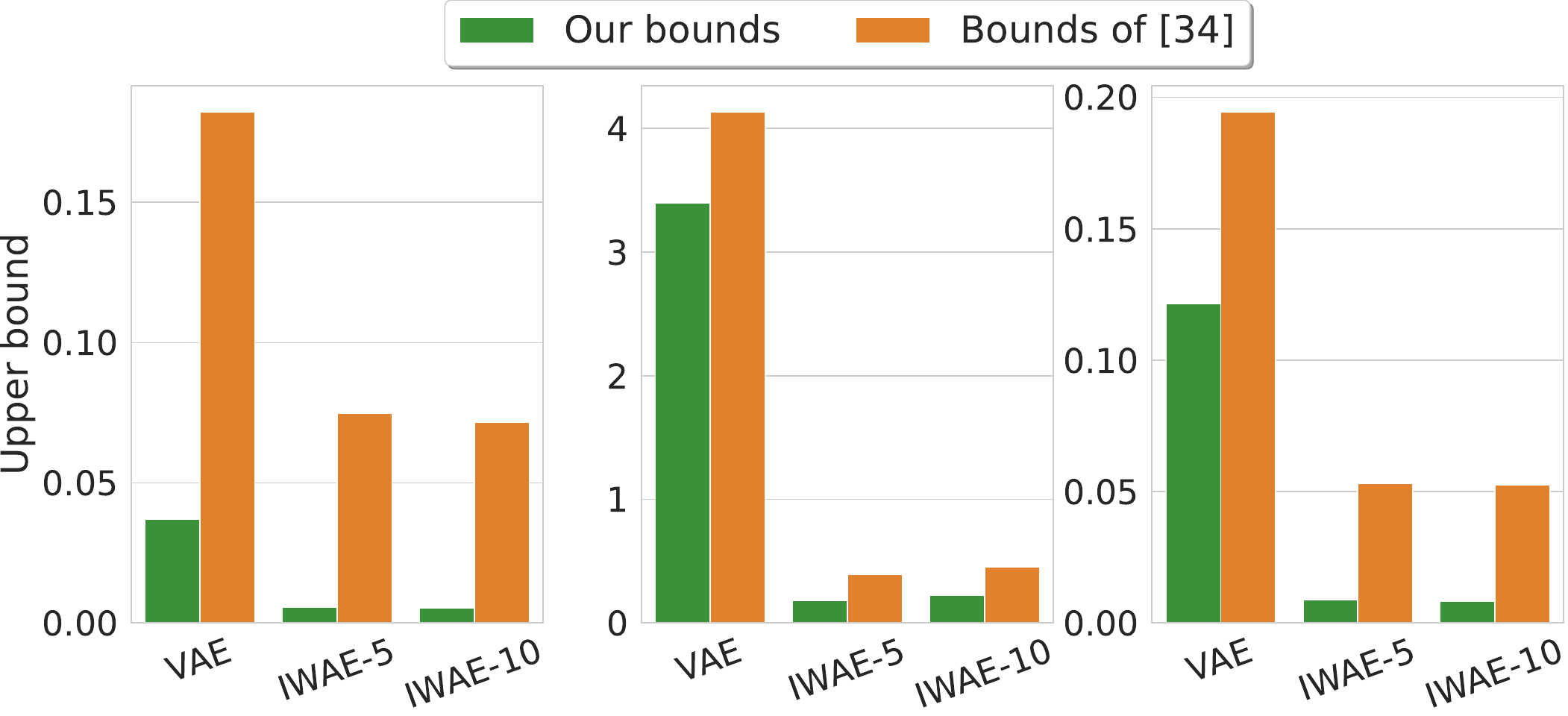}
\caption{Estimated upper bounds on Jensen's gap, which measure the tightness of the estimation of the log-likelihood of VAE, IWAE-5, and IWAE-10 models pre-trained on the MNIST (left), SVHN (middle), and CelebA (right) datasets (lower is better). 
All values shown are restricted to (and averaged over) all data points counted in Table \ref{tab:my_label}.}
\label{tab.gap}
\end{figure}

\begin{table}[t!]\small
    \caption{The number and percentage of data points (images) for which our method is better than the method of \citep{pmlr-v206-struski23a}. Note that incorporating our approach is beneficial for IWAEs trained on MNIST or SVHN datasets.}
    \label{tab:my_label}
    \centering
    \begin{tabular}{c@{\;\;\;\;}c@{\;\;\;\;}c@{\;\;\;\;}c}
    \toprule
      & \textbf{VAE} & \textbf{IWAE-5} & \textbf{IWAE-10}\\
    \midrule
    \textbf{MNIST} & 9593 (96.37\%)   & 9021 (100\%) & 8490 (100\%)\\
    \midrule
    \textbf{SVHN} & 59 (0.59\%)    & 4578 (46.07\%)   & 3653 (36.88\%)  \\
    \midrule
    \textbf{CelebA} & 18  (0.18\%)   & 99  (0.99\%)   & 76 (0.76\%) \\
    \bottomrule
    \end{tabular}
\end{table}

The results of our experiments are presented in Table \ref{tab:my_label} and Figure \ref{tab.gap}. Specifically, in Table \ref{tab:my_label} we provide the number and percentage of data points for which our method is superior to that of \citep{pmlr-v206-struski23a}. (We excluded points for which we could not obtain reasonable results due to some numerical problems.) It is clear that incorporating our approach is beneficial for models and datasets that produce less complicated latent distributions (such as IWAEs trained on MNIST or SVHN), while otherwise yielding little progress (we provide an additional ablation study on this phenomenon). Additionally, in Figure \ref{tab.gap} we compare the values computed by our method and the method of \citep{pmlr-v206-struski23a}, restricted to (and averaged over) all data points that benefited from the use of our method. It is evident that our approach provides superior bounds in these cases, probably due to the compliance with the proposed criteria (see the following paragraph).

 \paragraph{Ablation Study} The overarching observation from our experiments is the significant influence of the distribution of the random variable $\overline{R_x(Z)}$ on the effectiveness of our method of log-likelihood estimation over the state-of-the-art. From the results presented in Table \ref{tab:my_label}, we learn that for more complicated latent distributions with a significant number of outliers (e.g., those induced by the models pre-trained on the CelebA dataset), there are only a few data points for which we obtain superior results compared to the method of \citep{pmlr-v206-struski23a}. We suspect that this is because the log-normal approximation obtained is not good enough (see criterion (i)), although we have tried to find a sufficiently large $n$ in each case.

To substantiate the above hypothesis, we present a detailed breakdown of the results for random data points in Figure~\ref{fig:hist_qq-plot}. For each dataset and model combination, we have specifically selected two image examples: one illustrating a scenario where our method is superior to that of~\citep{pmlr-v206-struski23a}, and the other illustrating a situation where our bounds lag behind.
It is clear that in the first case we were able to achieve a Gaussian approximation for the distribution of $\log \overline{R_x(Z)}$, while in the other case we definitely failed.

\paragraph{Experimental Details} In all experiments, we used the experimental setup of \citep{pmlr-v206-struski23a}. Specifically, we examined variational autoencoders (VAEs) and importance-weighted autoencoders (IWAEs) pre-trained on the MNIST, SVHN, and CelebA datasets with a variety of objective functions, including the evidence lower bound (ELBO), importance-weighted ELBO with 5 importance samples ($\text{IW-ELBO}_5$), and importance-weighted ELBO with 10 importance samples ($\text{IW-ELBO}_{10}$). Notably, we kept the same neural architectures for the IWAE models as for the VAE framework.

In our experimental study across all datasets, we selected for evaluation 10000 random images from each test set. In particular, in the case of the MNIST dataset, we included the entire test set since it contains exactly 10000 images. For each of these individual images, we applied a grid search procedure, to find the smallest $n$ from the set of integers between 50 and 5000000 (depending on the dataset) for which the empirical standard deviation of $\log [\overline{R_x(z)}]=\log [\frac{1}{n}\sum_{i=1}^nR_x(z_i)]$ (where $x$ is an image from the dataset and $z$ is an $n$-sample from the latent space of a given generative model) does not exceed 0.3.

After determining $n$, we computed our upper bound for Jensen's gap, together with the one provided in \citep{pmlr-v206-struski23a}. All expectations arising in the respective bound formulas were estimated via averages over 10000 samples. 

\begin{figure}[t!]
    \centering
    \includegraphics[width=\columnwidth]{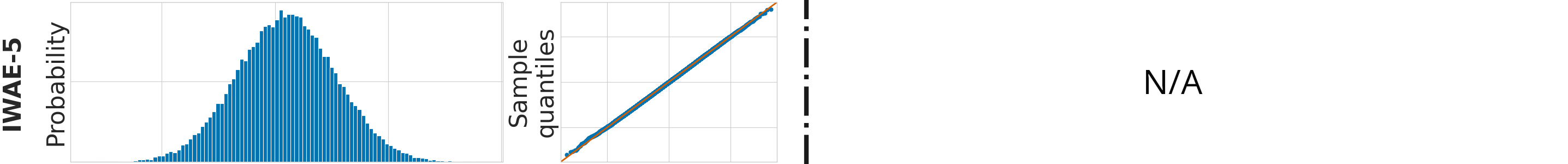}\\
        \includegraphics[width=\columnwidth]{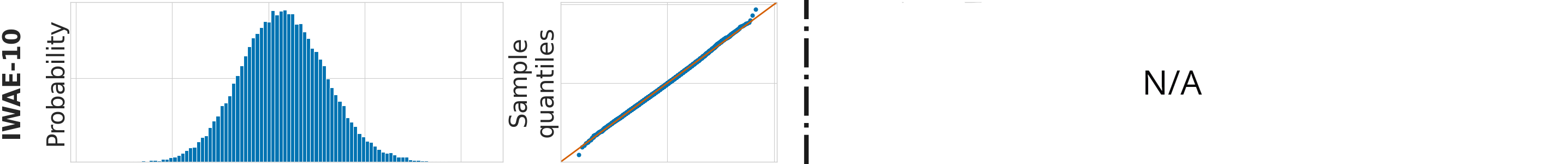}\\
        \includegraphics[width=\columnwidth]{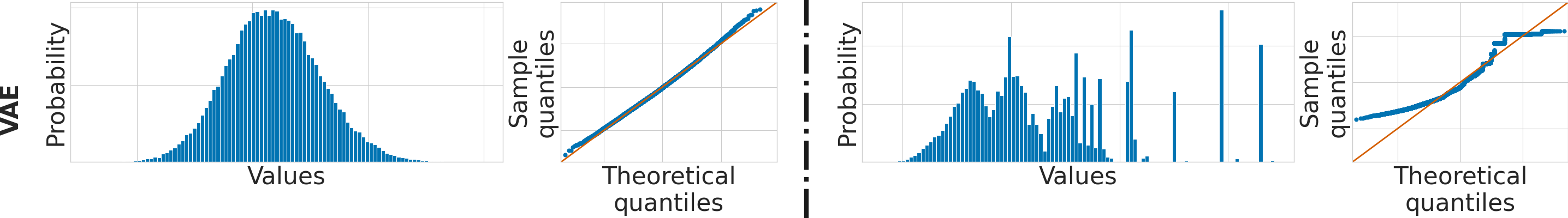}\\
        MNIST dataset : IWAE-5 (top), IWAE-10 (middle), VAE (bottom)
        \vskip1mm 
      \includegraphics[width=\columnwidth]{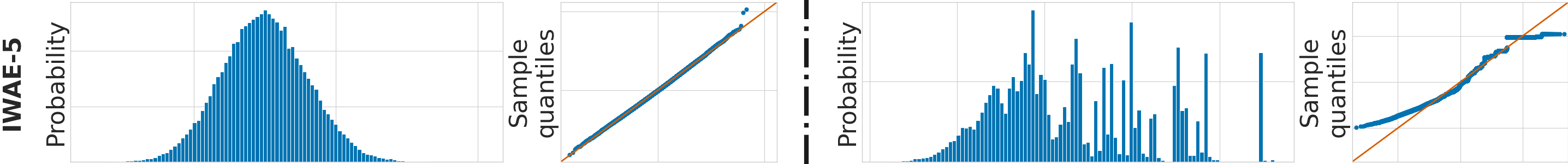}
        \includegraphics[width=\columnwidth]{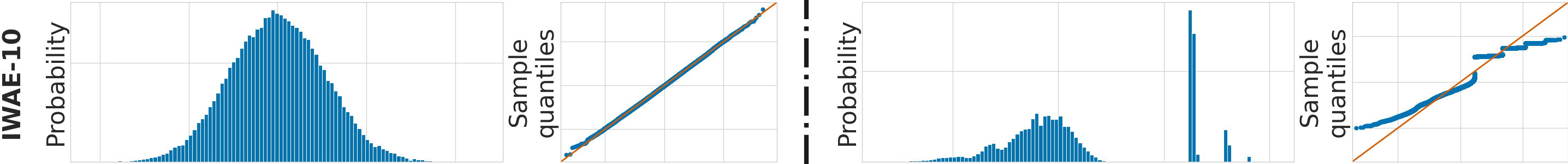}\\
        \includegraphics[width=\columnwidth]{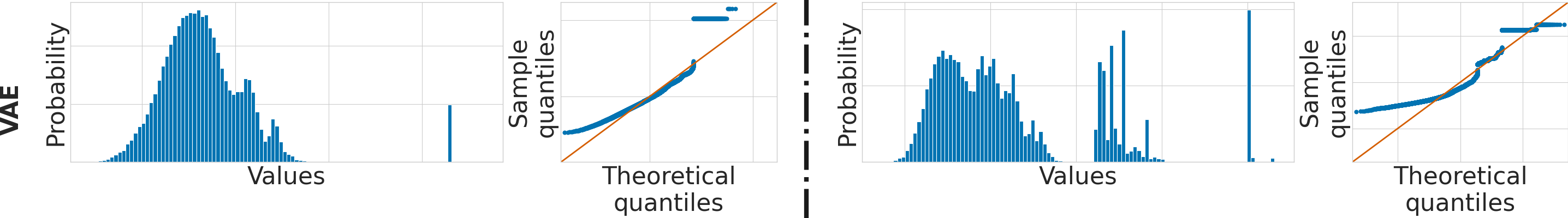}\\
        SVHN dataset: IWAE-5 (top), IWAE-10 (middle), VAE (bottom)
    \vskip1mm 
\includegraphics[width=\columnwidth]{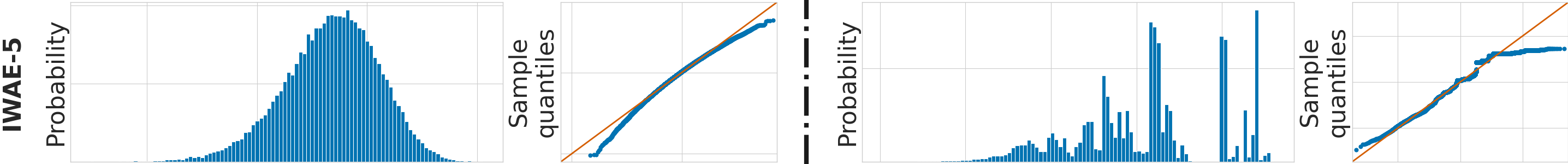}\\
        \includegraphics[width=\columnwidth]{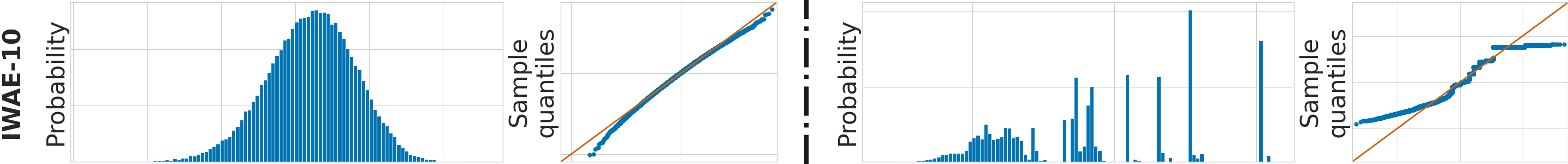}\\
        \includegraphics[width=\columnwidth]{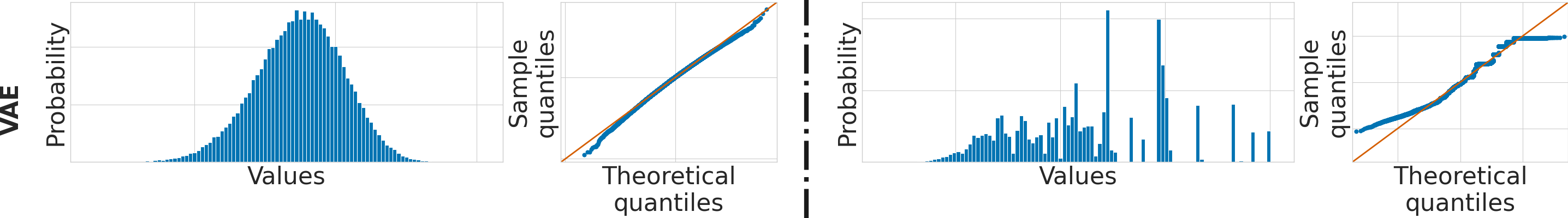}\\
        CelebA dataset: IWAE-5 (top), IWAE-10 (middle), VAE (bottom)
       \caption{Histograms and Q-Q plots~\citep{gnanadesikan1968probability,field2009shapiro} for randomly selected images associated with each dataset and model combination. The horizontal dashed line is a boundary between positive outcomes, where our method provides superior upper bound estimates, and negative outcomes, where our results are suboptimal. Notably, when our method wins, the underlying distribution closely approximates a Gaussian distribution, as shown in the Q-Q plot. Conversely, when our bound estimates are worse, the distribution deviates significantly from a Gaussian. Note that for the MNIST dataset and the IWAE-5/10 models, our method outperforms the state-of-the-art for all images, indicating that no images fall on the right side of the dashed horizontal line (marked N/A).}
    \label{fig:hist_qq-plot}
\end{figure}

\paragraph{Architecture Details}  
We used the same convolutional VAE architecture as in \citep{pmlr-v206-struski23a}. The model weights were optimized using the Adam optimizer with a learning rate of 0.0001. Training was performed for 100 epochs with batch size 64 on the CelebA dataset, and for 50 epochs on MNIST and SVHN datasets. We used Euclidean latent spaces $\mathcal{Z} = \mathbb{R}^d$, with dimensions $d = 8, 32, 128$ for MNIST, SVHN, and CelebA, respectively, and standard Gaussian priors $p(z) \sim \mathcal{N}(0, I_d)$.
The encoders used Gaussian coding,  characterized by $q(z|x) \sim \mathcal{N}(\mu_x, \Sigma_x)$, where $\Sigma_x$ is a diagonal covariance matrix. The decoders were Gaussian, with the following form: $p(x|z) \sim \mathcal{N}(\mu_z, \sigma^2 I)$, where $\sigma^2 = 0.3$.

For MNIST, the encoder consisted of two fully-connected layers, and the decoder of three fully-connected layers, with ReLU activations between layers. The SVHN architecture was deeper, with four layers each in encoder and decoder; the encoder used only 2D convolutions with leaky ReLU activation (leak 0.2), while the decoder used transposed 2D convolutions with ReLU activations, and a sigmoid activation in the final layer. For CelebA, the networks consisted mainly of repeated five-layer blocks; each encoder block contained a 2D convolution, batch normalization~\citep{ioffe2015batch}, and leaky ReLU (leak 0.2), while decoder blocks began with 2D nearest-neighbor upsampling, followed by a 2D convolution, batch normalization, and leaky ReLU activation (leak 0.2).

\section{Conclusions}
In this paper, we introduced a novel general lower and upper bound for Jensen's gap. By considering several special cases with different assumptions on the underlying function and distribution, we provided analytical and empirical arguments that our contribution has the potential to improve on state-of-the-art solutions provided in \citep{liao2019sharpening,lee2021further,pmlr-v206-struski23a}. In particular, we conducted experiments on real-world data, which demonstrated that our approach is superior to that of \citep{pmlr-v206-struski23a} as an efficient method for estimating the log-likelihood of probabilistic models, provided that the appropriate criteria are satisfied. Additionally, we demonstrated that our approach extends the second-order Jensen's gap approximation of \citep{NEURIPS2020_3ac48664} by providing a general framework for constructing higher-order bounds for the negative logarithmic function, leading to systematically tighter approximations of the generalization risk (cross-entropy). Finally, we linked these bounds to the PAC-Bayes approach, providing new insights into generalization
performance in probabilistic models.

\paragraph{Limitations}
Our contribution has some limitations that could be considered as starting points for future work. First, the superiority of the provided log-likelihood estimation method strongly depends on the properties of the underlying distribution, which may cause some problems in experiments on large-scale datasets.  Consequently, even though we provide rigorous bounds in all cases considered, expressed in terms of the moments of the respective random variables, their computation is often problematic in our experiments on real-world data, as they are based on biased estimators. Second, the considered real-world applications of this method are limited to probabilistic models, which do not exhaust all possible uses of variational inference in machine learning.  Finally, while our higher-order corrections are theoretically compatible with the PAC-Bayes framework, we do not provide explicit PAC-Bayes bounds in this work, which limits direct comparisons with existing PAC-Bayesian results (however, we plan to investigate this issue in future work).

\paragraph{Societal Impact}
We do not foresee any negative societal consequences of our contribution. However, even though the training of novel deep generative architectures was not the scope of our work, we cannot exclude the potential applicability of our approach for such purposes in further studies. Therefore, we must emphasize that the use of generative modeling in real-world applications requires careful monitoring to avoid amplifying societal biases (which are present in the data).

\begin{acks}
This research was partially funded by the National Science Centre, Poland, grants no. 2023/50/E/ST6/00068 (work of Marcin Mazur) and 2023/49/B/ST6/01137 (work of Łukasz Struski). Some experiments were performed on servers purchased with funds from the flagship project entitled ``Artificial Intelligence Computing Center Core Facility'' from the DigiWorld Priority Research Area within the Excellence Initiative -- Research University program at Jagiellonian University in Kraków.
\end{acks}

\section*{GenAI Usage Disclosure}
Generative AI software tools were used exclusively during the writing stage to edit and improve the clarity and quality of the existing manuscript text. No AI-generated content was used to produce novel research ideas, analyses, or results.

\bibliographystyle{ACM-Reference-Format}
\balance
\bibliography{ref}

\end{document}